\newtheorem{theorem}{Theorem}
\newtheorem{lemma}{Lemma}
\def\BibTeX{{\rm B\kern-.05em{\sc i\kern-.025em b}\kern-.08em
    T\kern-.1667em\lower.7ex\hbox{E}\kern-.125emX}}
\begin{document}

\title{FedDIP: Federated Learning with Extreme \\Dynamic Pruning and Incremental Regularization}
\author{
\IEEEauthorblockN{Qianyu~Long\IEEEauthorrefmark{1},
Christos~Anagnostopoulos\IEEEauthorrefmark{1},
Shameem~Puthiya~Parambath\IEEEauthorrefmark{1}, and
Daning~Bi\IEEEauthorrefmark{2}}

\IEEEauthorblockA{\IEEEauthorrefmark{1}
\textit{School of Computing Science}, \textit{University of Glasgow, UK} \\
2614994L@student.gla.ac.uk, \{christos.anagnostopoulos, sham.puthiya\}@glasgow.ac.uk}

\IEEEauthorblockA{\IEEEauthorrefmark{2}
\textit{College of Finance and Statistics}, \textit{Hunan University, China} \\
daningbi@hnu.edu.cn}
}

\maketitle
\begin{abstract}
Federated Learning (FL) has been successfully adopted for distributed training and inference of large-scale Deep Neural Networks (DNNs). 
However, DNNs are characterized by an extremely large number of parameters, 
thus, yielding significant challenges in exchanging these parameters among distributed nodes and managing the memory. Although recent DNN compression methods (e.g., sparsification, pruning) tackle such challenges, they do not holistically consider an adaptively controlled reduction of parameter exchange while maintaining high accuracy levels. We, therefore, contribute with a novel FL framework (coined FedDIP), which combines (i) dynamic model pruning with error feedback to eliminate redundant information exchange, which contributes to significant performance improvement, with (ii) incremental regularization that can achieve \textit{extreme} sparsity of models. We provide convergence analysis of FedDIP and report on a comprehensive performance and comparative assessment against state-of-the-art methods using benchmark data sets and DNN models. Our results showcase that FedDIP not only controls the model sparsity but efficiently achieves similar or better performance compared to other model pruning methods adopting incremental regularization during distributed model training. The code is available at : https://github.com/EricLoong/feddip.
\end{abstract}

\begin{IEEEkeywords}
 Federated Learning, dynamic pruning, extreme sparsification, incremental regularization. 
\end{IEEEkeywords}
\newtheorem{remark}{Remark}

\section{Introduction}
Federated Learning (FL) \cite{mcmahan2017communication} is a prevalent \textit{distributed learning} paradigm due to its ability to tackle learning at scale. FL plays a significant role in large-scale predictive analytics by enabling the decentralization of knowledge discovery. FL contributes towards privacy preservation, which overcomes fundamental issues of data governance and ownership \cite{Kaissis2020}. 
Distributed training and deploying large-scale Machine Learning (ML) models, i.e., Deep Neural Networks (DNNs), impose significant challenges due to the huge volumes of training data, \textit{large} models, and diversity in data distributions.

Distributed computing nodes, mainly located at the network edge being as close to data sources as possible, collaboratively engineer ML models rather than depending on collecting all the data to a centralized location (data center or Cloud) for training \cite{yang2019federated}. This computing paradigm coined Edge Computing, has been successfully applied to various predictive modeling, mining, and analytics applications, e.g., in finance \cite{long2020federated}, healthcare \cite{xu2021federated} and wireless sensor networks \cite{niknam2020federated}. 

DNNs are characterized by an extremely large number of parameters. 
For instance, the Convolutional Neural Networks (CNN) ResNet50 \cite{he2016deep} and VGG16 \cite{simonyan2014very} consist of 27 and 140 million parameters, respectively, while generative AI models, like GPT-2 have more than 1.5 billion parameters \cite{radford2019language}. Evidently, this places a great burden on distributed computing nodes when exchanging model parameters during training, tuning, and inference. 

Model size reduction (pruning) methods, e.g., \cite{Aji2017}, \cite{alistarh2018convergence}, \cite{Jiang2022} aim to retain the prediction accuracy while reducing the communication overhead by decreasing the number of model parameters exchanged among nodes. However, most pruning methods focus on the compression of model gradients. Even though they yield high compression rates, they do not achieve significantly compact models for exchange. 
But in general, methods that can produce compact models along with significant redundancy in the number of DNN weights by sophisticatedly pruning the weights are deemed appropriate \cite{8736011}. 
In contrast to model gradient compression, model weight compression significantly shrinks the model size by setting most of the weights to zero. This is desirable for eliminating redundancy in model exchange during distributed knowledge extraction. But often such models result in performance degradation. 
Therefore, the question we are addressing is: \textit{How to effectively introduce model pruning mechanisms in a decentralized learning setting that is capable of achieving extremely high compression rates while preserving optimal predictive performance?} We contribute with an efficient method based on \textit{dynamic pruning with error feedback} and \textit{incremental regularization}, coined FedDIP. FedDIP's novelty lies in the principle of adapting dynamic pruning in a decentralized way by pushing unimportant weights to zeros (extreme pruning) whilst maintaining high accuracy through incremental regularization. To the best of our knowledge, FedDIP is the first approach that combines incremental regularization and extreme dynamic pruning in FL. 

The paper is organized as follows: Section \ref{sec:RW} reports on related work and our contribution. Section \ref{sec:prelimiaris} provides preliminaries in FL and model pruning methods. Section \ref{sec:Framework} elaborates on the FedDIP framework, while Section \ref{sec:CA} reports on the theoretical properties of FedDIP and convergence analysis. Our experimental results in Section \ref{sec:results} showcase the efficiency of FedDIP in distributed learning. Section \ref{sec:conclusion} concludes the paper with future research directions.

\section{Related Work \& Contribution}
\label{sec:RW}
\subsection{Model Gradient \& Model Weight Sparsification}
Expensive and redundant sharing of model weights is a significant obstacle in distributed learning \cite{li2020federated}.
The size of the exchanged models among nodes can be reduced by compression and sparsification. The work in \cite{alistarh2018convergence} adopts magnitude selection on model gradients to yield sparsification when using Stochastic Gradient Descent (SGD). Instead of dense updates of weights, \cite{Aji2017} proposed a distributed SGD that keeps 1\% of the gradients 
by comparing their magnitude values. The method in \cite{strom2015scalable} scales up SGD training of DNN via controlling the rate of weight update per individual weight. \cite{konevcny2018randomized} develops encoding SGD-based vectors achieving reduced communication overhead. \cite{jiang2018linear} proposed the periodic quantized averaging SGD strategy that attains similar model predictive performance while the size of shared model gradients is reduced $95\%$. In \cite{Lin2018}, the authors argued that 99\% of gradients are redundant and introduced a deep gradient compression method, which achieves compression rates in the range 270-600 with sacrificing accuracy. The \textit{gTop-k} gradient sparsification method in \cite{Shi2019} reduces communication cost based on the \textit{Top-k} method in \cite{Lin2018}. \cite{sun2019communication} develops a method based on \cite{chen2018lag} that adaptively compresses the size of exchanged model gradients via quantization.

In contrast to gradient sparsification, the shrinkage of the \textit{entire} model size is of paramount importance in distributed learning. It not only eliminates communication redundancy during training but also enables less storage and inference time, which makes FL welcome in distributed knowledge systems. However, so far, only centralized learning adopts model compression via, e.g., weight pruning, quantization, low-rank factorization, transferred convolutional filters, and knowledge distillation \cite{Cheng2017}, with pruning being our focus in this work. SNIP \cite{lee2018snip} introduces a method that prunes a DNN model once (i.e., prior to training) based on the identification of important connections in the model. \cite{He2017} proposes a centralized two-step method that prunes each layer of a DNN via regression-based channel selection and least squares reconstruction. The method in \cite{Zhang2018} prunes CNNs centrally using the Alternating Direction Method of Multipliers (ADMM). Following \cite{Zhang2018}, the PruneTrain method \cite{Lym2019} uses structured group-LASSO regularization to accelerate CNN training in a centralized location only. The DPF \cite{Lin2020} method allows dynamic management of the model sparsity with a feedback mechanism that re-activates pruned weights. 

\subsection{Contribution}
Most of the approaches in FL take into account only the communication overhead and thus adopt gradient sparsification. Nonetheless, weight sparsification is also equally important and can lead to \textit{accurate} distributed sparse models.
Such sparse models are lightweight and, thus, suitable for storage, transfer, training, and fast inference. As shown in \cite{yao2019federated}, model weights and gradients averaging policies are \textit{equivalent} only when the local number of model training epochs equals one. FedDIP tries to bridge the gap of weights average pruning in FL by obtaining highly accurate sparse models through incremental regularization and reducing communication during training through dynamic pruning. 

To the best of our knowledge in distributed learning, PruneFL \cite{Jiang2022} FedDST \cite{bibikar2022federated} and LotteryFL \cite{li2021lotteryfl} methods attempt model pruning. However, LotteryFL focuses on a completely different problem from ours. LotteryFL tries to \textit{discover sparse local} sub-networks (a.k.a. Lottery Ticket Networks) of a base DNN model. In contrast, FedDIP searches for a \textit{sparse global} DNN model with mask readjustments on a central server, as we will elaborate on later. PruneFL starts with a pre-selected node to train a global shared mask function, while FedDIP generates the mask function with weights following the Erd\H{o}s-Ren\'eyi-Kernel (ERK) distribution \cite{evci2020rigging}, as we will discuss in the later sections. FedDST, as proposed by Bibikar et al., initially derives a pruning mask based on the ERK distribution. Subsequent stages involve layerwise pruning on the global model. The method ensures efficient training through a prune-regrow procedure, which maintains a local sparse mask, particularly under non-iid data distributions. 
Our technical contributions are:
\begin{itemize}
    \item  An innovative federated learning paradigm, coined FedDIP, combines extreme sparsity-driven model pruning with incremental regularization. 
    \item FedDIP achieves negligible overhead keeping accuracy at the same or even higher levels over extremely pruned models. 
    \item Theoretical convergence and analysis of FedDIP.
    \item A comprehensive performance evaluation and comparative assessment of FedDIP with benchmark i.i.d. and non-i.i.d. datasets and various DNN models. Our experimental results reveal that FedDIP, in the context of high model compression rates, delivers superior prediction performance compared to the baseline methods and other approaches found in the literature, specifically, FedAvg \cite{mcmahan2017communication}, PruneFL \cite{Jiang2022}, PruneTrain \cite{Lym2019}, FedDST \cite{bibikar2022federated}, DPF \cite{Lin2020}, and SNIP \cite{lee2018snip}.  
\end{itemize}
\begin{table}[htbp]
    \centering
    \begin{tabular}{ |l|l|  }
\hline
 \textbf{Notations} &\textbf{Definition}\\
 \hline
  \hline
  $N,K$ & $N$: total number of nodes, where  $K < N$ nodes \\
  & participated in each training round\\
     \hline
    \multicolumn{2}{|l|}{$n$ indexes a node; $z$ indexes a DNN layer; $n \in [N], z \in [Z]$} \\
    \multicolumn{2}{|l|}{$[N]$ abbreviates the integer sequence $1, 2, \ldots, N$}\\
    \hline
 $\mathcal{D}_{n}, D_{n}$ & Dataset and its size on node $n$. \\
 $(\mathbf{x},y) \in \mathcal{D}_{n}$ & $\mathbf{x},y$ are features and labels in node $n$'s dataset\\ 
 $f(\cdot), \nabla f(\cdot)$&  Loss function and its derivative \\
 $\rho_{n}, \eta$ & Weight percentage and learning rate\\
 $\boldsymbol{\omega}_{G}$, $\boldsymbol{\omega}_{n}$, $\boldsymbol{\omega}_{n}^{\prime}$ & Global, local and pruned local model parameters\\
 $T, E_{l}, \tau, \ell$ & Global and local rounds, global and local epochs\\
 $\lambda$ & Regularization hyperparameter\\ 
$\odot$     & Element-wise (Hadamard) product\\
$s_{0}$, $s_{t}$, $s_{p}$ & initial sparsity, sparsity at round $t$, final sparsity\\
\hline
\end{tabular}
\caption{Table of Notations}
\label{table:notations}
\end{table}

\begin{figure}[!ht]
    \centering
    {
     \includegraphics[trim=8cm 3cm 9cm 5cm, clip, width=0.37\textwidth]{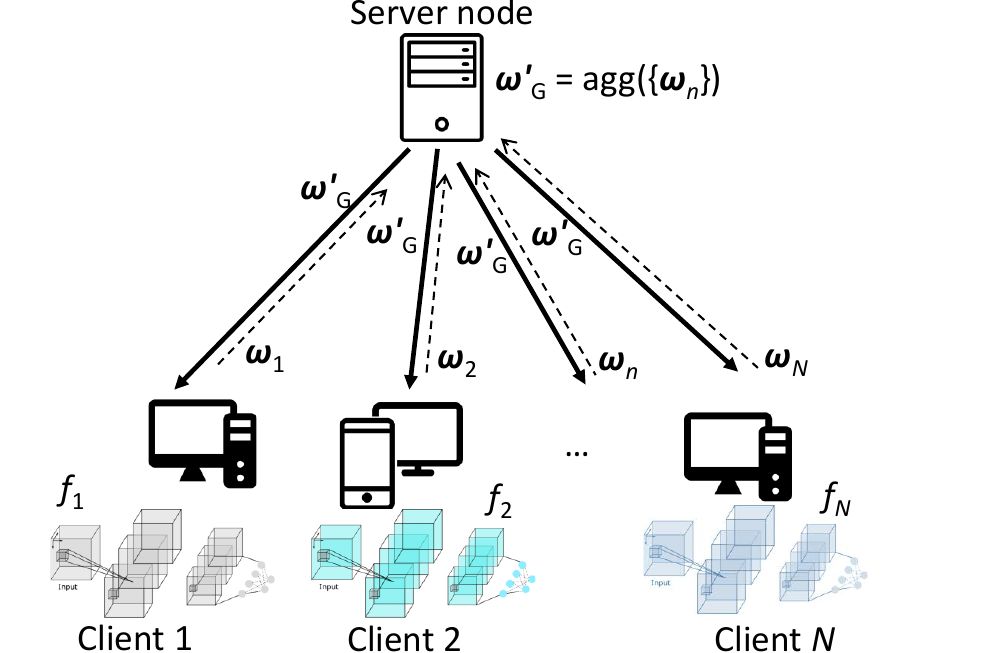}
    }
    \caption{
\textbf{Illustration of the FedDIP framework:} \\
(1) During the downlink phase, the pruned global model \( \mathcal{\omega}^{\prime}_{G} \) is broadcasted to participating clients. \\
(2) In the uplink phase, each selected client communicates its local dense model \( \mathcal{\omega}_{n} \) back to the server for aggregation. \\
(3) The global mask \( \mathbf{m}_{G} \) is derived from the global model, directing the sparse training (DPF) across clients.} 
    \label{fig:system}
\end{figure}

\section{Preliminaries}
\label{sec:prelimiaris}
\subsection{Federated Learning}
For the general notations and definitions, please refer to Table \ref{table:notations}. Consider a distributed learning system involving a set of $N$ nodes (clients) $\mathcal{N} = \{1, 2, \ldots, N\}$.
Let $\mathcal{D}_{n} = \{(\mathbf{x},y)\}$ be the local dataset associated with a node $n \in \mathcal{N}$ such that $\mathbf{x} \in \mathcal{X} \subset \mathbb{R}^{d}$, $y \in \mathcal{Y} \subset \mathbb{R}$, and $D_{n} = |\mathcal{D}_{n}|$.
In the standard FL setting, given a subset of $K < N$ nodes $\mathcal{N}_{c} \subset \mathcal{N}$, the local loss is given by:
\begin{equation}
\label{eq:fedavg_local}
    f_{n}(\boldsymbol{\omega}) = 
    \frac{1}{D_{n}} 
    \sum_{(\mathbf{x},y) \in \mathcal{D}_{n}}\mathcal{L}( \mathcal{G}(\boldsymbol{\omega},\mathbf{x}),y)
\end{equation}
where $\boldsymbol{\omega}$ is the model parameter, $\mathcal{G}$ is the discriminant function that maps the input space to output space and  $\mathcal{L}$ is a loss function that measures the quality of the prediction, e.g., mean-squared-error, maximum likelihood, cross-entropy loss.
The global loss function for all the selected nodes $n \in \mathcal{N}_{c}$ is:
\begin{equation}
    f(\boldsymbol{\omega}) = \sum_{n \in \mathcal{N}_{c}}\rho_{n} f_{n}(\boldsymbol{\omega}), \text{ where }
    \rho_{n}= \frac{D_{n}}{\sum_{j \in \mathcal{N}_{c}}D_{j}}.
\end{equation}
The model training process spans periodically over $T$ global rounds 
with $L$ local rounds. Let $t \in = \{0, 1, \ldots, T-1\}$ be a discrete-time instance during the training process. Then, $\tau=\lfloor\frac{t}{L}\rfloor L$ is the start time of the current global epoch. At $\tau$, the nodes (clients) receive updated aggregated weights $\bar{\boldsymbol{\omega}}^{\tau}$ from the node responsible for aggregating the nodes' model parameters, a.k.a. the server node. 
The local training at client $n$ at local epoch $l = 1, \ldots, L$ proceeds as:
\begin{equation}
\label{eq:local_training}
    \boldsymbol{\omega}^{(\tau+l)+1}_{n} = \boldsymbol{\omega}^{\tau+l}_{n}-\eta_{\tau+l}
    \nabla f_{n}(\boldsymbol{\omega}^{\tau+l}_{n}),
\end{equation}
where $\eta \in(0,1)$ is the learning rate. The weight averaging policy on the server node can be written as:
\begin{eqnarray}
\label{eq:cloud_avg}
    \bar{\boldsymbol{\omega}}^{\tau} & = & \sum_{n \in \mathcal{N}}\rho_{n}\boldsymbol{\omega}_{n}^{\tau}.
\end{eqnarray}

\subsection{Model Pruning}
\label{sect:dis_adap_prune}
In centralized learning systems (e.g., in Cloud), where all data are centrally stored and available, the model pruning \cite{Zhu2017} aims to sparsify various connection matrices that represent the weights of the DNN models. 
Notably, \textit{sparsity}, hereinafter noted by $s \in [0,1]$, indicates the proportion of non-zero weights among overall weights.
A 100\% sparse ($s=1$) model indicates that all the weights are negligible (their values are close to $0$), 
while a 0\% sparse ($s=0$) model stands for the full model with original weight values. 
Typically, the reduction of the number of nonzero weights (pruning) of a DNN model is achieved using \textit{mask functions}. A mask function $\mathbf{m}$ acts like an indicator function that decides whether the parameter/weight at a certain position in a layer of a DNN model is zero or not. The model pruning based on mask functions requires a criterion to select the parameters to prune. The most common pruning criterion considers the absolute value of the weights of each parameter in a layer. Generally, a parameter is removed from the training process if its absolute value of the weight is less than a predefined threshold. 

On the other hand, model pruning in FL is vital in light of reducing communication cost in \textit{each} training round. Moreover, the global number of rounds should be reduced as this significantly contributes to the overall communication overhead. Hence, in FL, pruning aims at \textit{extreme} model compression rates, i.e., $s \geq 0.8$ with a relatively small compromise in prediction accuracy. 
It is then deemed appropriate to introduce a distributed and adaptive pruning method with relatively high and controlled DNN model sparsity, which reduces communication costs per round along with ensuring convergence under high sparsity with only a marginal decrease in prediction accuracy.

The pruning techniques are typically categorized into three: 
\textit{pruning before training} (e.g., SNIP \cite{lee2018snip}), \textit{pruning during training} (e.g., PruneTrain \cite{Lym2019}, FedDST \cite{bibikar2022federated}, DPF \cite{Lin2020} and PruneFL \cite{Jiang2022}),  and \textit{pruning after training}. In this work, we concentrate on the two former techniques, which deal with efficient model training. The \textit{pruning after training} approach offers limited utility in the context of distributed learning. The two commonly employed techniques for pruning are: (i) Regularization-based Pruning (RP) and (ii) Importance-based Pruning (IP) \cite{Wang2021}. The interested reader may refer to \cite{He2017,Zhang2018,Wang2021} and the references therein for a comprehensive survey of RP and IP techniques. 
RP uses intrinsic sparsity-inducing properties of $L_1$ (Manhattan distance) and $L_2$ (Euclidean distance) norms to limit the  \textit{importance} of different model parameters. The sparsity-inducing norms constrain the weights of the unimportant parameters to small absolute values during training. Moreover, RP can effectively constrain the weights into a sparse model space via tuning the regularization hyperparameter $\lambda$.
Whereas in IP, parameters are pruned purely based on predefined formulae that are defined in terms of the weights of the parameters or the sum of the weights. IP techniques were originally proposed in the unstructured pruning settings that can result in sparse models not capable of speeding up the computation. Even though RP techniques are considered superior to IP techniques, they struggle with two fundamental challenges: (\textbf{C1}) The first challenge pertains to controlling the sparsity value $s$ during pruning. For example, in PruneTrain \cite{Lym2019}, employing a pruning threshold value of $10^{-4}$ to eliminate model parameters does not guarantee the delivery of a sparse model. (\textbf{C2}) The second challenge is dynamically tuning a regularization parameter $\lambda$. A large $\lambda$ leads to model divergence during training, as the model may excessively lean towards penalty patterns. By adding regularization terms in DNN training traditionally aims for overfitting issues. 
However, additional regularization for prunable layers is required for RP, which is the core difference between traditional training and RP-based training.
\section{The FedDIP Framework}
\label{sec:Framework}
The proposed FedDIP framework integrates extreme dynamic pruning \textit{with} error feedback and incremental regularization in distributed learning environments. Figure \ref{fig:system} illustrates a schematic representation of the FedDIP, which will be elaborated on in this section. FedDIP attempts to effectively train pruned DNN models across collaborative clients ensuring convergence by addressing the two challenges \textbf{C1} and \textbf{C2} prevalent in RP-based methods discussed in Section \ref{sect:dis_adap_prune}. 
The dynamic pruning method (DPF) in \cite{Lin2020} demonstrates improved performance in comparison with other baselines under high sparsity. Given the SGD update scheme, the model gradient in DPF is computed on the pruned model as:
\begin{equation}
\label{eq:DPF}
\boldsymbol{\omega}_{t+1} = \boldsymbol{\omega}_{t}-\eta_{t}\nabla f(\boldsymbol{\omega}_{t}^{\prime}) = \boldsymbol{\omega}_{t}-\eta_{t}\nabla f(\boldsymbol{\omega}_{t}\odot \mathbf{m}_{t}),
\end{equation}
taking into account the error feedback (analytically):
\begin{equation}\label
{eq:DPF_explanation}
    \boldsymbol{\omega}_{t+1} = \boldsymbol{\omega}_{t}-\eta_{t}\nabla f(\boldsymbol{\omega}_{t}+\mathbf{e}_{t}),
\end{equation} 
where $\mathbf{e}_{t} = \boldsymbol{\omega}_{t}^{\prime}-\boldsymbol{\omega}_{t}$. In (\ref{eq:DPF}), $\odot$ represents the Hadamard (element-wise) product between the two model weights, $\boldsymbol{\omega}_{t}$ represents the entire model parameters, $\boldsymbol{\omega}_{t}^{\prime}$ represents the pruned model parameters, and $\mathbf{m}$ is the adopted mask function used for pruning as in, e.g., in \cite{Jiang2022}, \cite{Lym2019}, and \cite{Lin2020}. The mask is applied on the model parameters $\boldsymbol{\omega}_{t}$ to eliminate weights according to the magnitude of each weight, thus, producing the pruned $\boldsymbol{\omega}_{t}^{\prime}$. Applying the gradient, in this case, allows recovering from errors due to premature masking out of important weights, i.e., the rule in (\ref{eq:DPF}) takes a step that best suits the pruned model (our target). 
In contrast, all the pruning methods adopted in FL, e.g., \cite{Jiang2022}, led to sub-optimal decisions by adopting the rule:
\begin{equation}
\label{eq:prune_both}
\boldsymbol{\omega}_{t+1} = \boldsymbol{\omega}_{t}^{\prime}-\eta_{t}\nabla f(\boldsymbol{\omega}_{t}^{\prime}).
\end{equation}
One can observe that the update rule in (\ref{eq:DPF}) retains more information, as it only computes gradients of the pruned model, compared to the update rule in (\ref{eq:prune_both}). This is expected to yield superior performance under high sparsity. 

Moreover, it is known that the multi-collinearity\footnote{In multi-collinearity, two or more independent variables are highly correlated in a regression model, which violates the \textit{independence} assumption.} challenge is alleviated by the Least Absolute Shrinkage and Selection Operator (LASSO). LASSO performs simultaneous variable selection and regularisation \cite{Fu2000}. LASSO adds the $L_1$ regularization term to the regression loss function, providing a solution to cases where the number of model parameters is significantly larger than the available observations. Apparently, this is the case in DNNs, which typically involve millions of parameters with only tens of thousands of observations. 
The two challenges reported in Section~\ref{sect:dis_adap_prune} deal with selecting appropriate dynamic policies for sparsity control and regularization hyperparameter $\lambda$. To address the challenge \textbf{C1}, we 
dynamically drop the least $s \cdot 100\%$ percentile according to weights magnitude. The challenge \textbf{C2} is addressed by incrementally increasing the regularization parameter departing from the principles of LASSO regression. 
It is also evidenced in \cite{Wang2021} that growing regularization benefits pruning.  
Based on these observations, we establish the FedDIP algorithm to maintain the predictive model performance under extreme sparsity with incremental regularization and dynamic pruning. To clarify terminology, we refer to our algorithm that directly applies dynamic pruning as `FedDP' (addressing challenge \textbf{C1}), while `FedDIP' represents the variant that also adds incremental regularization (addressing both challenges \textbf{C1} and \textbf{C2}). Collectively, we refer to these variants as `FedD(I)P'. 
Each node $n \in \mathcal{N}$ first trains a local sparse DNN model, which contains weights with relatively small magnitudes (see also Fig. \ref{fig:system}). Then, the node $n$ optimizes the proposed \textit{local incrementally regularized loss function} at round $t$ as: 
\begin{equation}
\label{eq:fed_dip}
        f_{n}(\boldsymbol{\omega}_{t}) = \frac{1}{D_{n}} \sum_{(\mathbf{x},y) \in \mathcal{D}}\mathcal{L}(G(\boldsymbol{\omega}_{t},\mathbf{x}),y) + \lambda_{t}\sum_{z=1}^{Z}\|\boldsymbol{\omega}_{t}^{(z)}\|_{2},
\end{equation}
where the step $t$ dependent regularization parameter $\lambda_{t}$ controls the degree of model shrinkage, i.e., the \textit{sparsity}, and $Z$ is the number of the DNN layers (this, of course, depends on the DNN architecture; in our experiments, it is the sum of convolutional and fully connected layers).
The norm $\|\boldsymbol{\omega}^{(z)}\|_{2} = (\sum_{k} \lvert \omega^{(z)}_{k} \rvert^{2})^{1/2}$ is the $L_{2}$ norm of the pruned $z^{th}$ layer of model weights $\boldsymbol{\omega}^{(z)}$. We then introduce the incremental regularization over $\lambda_{t}$ based on the schedule: 
\begin{equation}
\label{eq:increase_lambda}
  \lambda_{t} = 
  \begin{cases} 
   0 & \text{if } 0 \leq t < \frac{T}{Q} \\
   \vdots & \vdots \\
   \frac{\lambda_{\text{max}} \cdot (i-1)}{Q} & \text{if } \frac{(i-1)T}{Q} \leq t < \frac{iT}{Q} \\
   \vdots & \vdots \\
   \frac{\lambda_{\text{max}}(Q-1)}{Q} & \text{if } \frac{(Q-1)T}{Q} \leq t \leq T \\
  \end{cases}
\end{equation}
with quantization step size $Q>0$. The influence of $Q$ on regularization is controlled by adapting $\lambda_{max}$. Such step size divides the regularization parameter space from $\frac{\lambda_{\max}}{Q}$ to $\lambda_{\max}$ to achieve a gradual increase of regularization at every $\frac{T}{Q}$ rounds. In addition, each node $n$ 
adopts dynamic pruning to progressively update its local model weights $\boldsymbol{\omega}_{n}^{\tau+L}$ to optimize (\ref{eq:fed_dip}) as: 
\begin{equation}
\label{eq:local_training_dpf}
    \boldsymbol{\omega}^{\tau,l+1}_{n} = \boldsymbol{\omega}^{\tau,l}_{n}-\eta_{\tau}
    \nabla f_{n}(\boldsymbol{\omega}^{\prime(\tau,l)}_{n}),
\end{equation}
where $\boldsymbol{\omega}^{\prime(\tau+l)}_{n}$ is obtained through pruning based on a global mask function $\mathbf{m}_{\tau}$ generated by the server node. 
Moreover, our gradual pruning policy modifies the sparsity update policy per round from \cite{Li2020} by incrementally updating the sparsity as:
\begin{equation}
\label{eq:gradual_pruning}
s_{t} = s_{p}+(s_{0}-s_{p})\Big(1-\frac{t}{T}\Big)^{3},
\end{equation}
where $s_{t}$ represents the sparsity applied to the model pruning at round $t$, $s_{0}$ is the initial sparsity, and $s_{p}$ is the desired/target sparsity. 
Notably, in our approach 
$s_{0}$ is strictly non-zero; this can be a moderate sparsity of $s_{0}=0.5$. Such adaptation differentiates our method from \cite{Li2020}, where $s_{0}=0$. In essence, we permit the sparsity to increment from moderate to extreme levels throughout the process. If considering $s_{0} > 0$, the layer-wise sparsity of the initial mask follows the ERK distribution introduced in \cite{evci2020rigging}.
At the end of a local epoch $l$, the server node collects $K < N$ model weights $\boldsymbol{\omega}_{n}^{\tau+l}$ from the selected nodes $n \in \mathcal{N}_{c}$, and calculates the global weights average as:
\begin{equation}
\label{eq:prune_omega}
    \bar{\boldsymbol{\omega}}^{\tau+l}_{G} = \sum_{n \in \mathcal{N}}\rho_{n}\boldsymbol{\omega}_{n}^{\tau+l}.
\end{equation}
In addition, the $\mathbf{m}_{\tau}$ mask function is generated based on pruning on $\bar{\boldsymbol{\omega}}^{\tau+l}_{G}$ with current sparsity $s_{\tau}$. 
The FedDIP process is summarized in Algorithm \ref{algorithm:feddip}, where \textit{only} pruned models are exchanged from server to nodes, while pruning is \textit{locally} achieved in the clients. \textbf{Note:} FedDIP achieves data-free initialization and generalizes the DPF \cite{Lin2020} in dynamic pruning process. When we set initial $s_{0} = 0$ and no incremental regularization, i.e., $\lambda_{t} = 0$, $\forall t$, then FedDIP reduces to DPF. Moreover, we obtain our variant FedDP if we set $\lambda_{t} = 0$, $\forall t$ with $s_{0} > 0$ w.r.t. ERK distribution.

\begin{remark}

     \textbf{Trade-off between Pruning and Fine-tuning:} The FedDIP approach introduces a reconfiguration horizon, denoted as \( R \), during the model training phase to periodically update the mask function. Specifically, the mask function \( \mathbf{m}_{\tau} \) is updated at every \( R \) global round, i.e., when \( \tau \mod R = 0 \), to ensure a consistent and smooth accuracy learning curve. The value of this horizon is determined empirically. \textit{Potential Outcomes of Insufficient Pruning:} If the mask function remains unchanged throughout the horizon \( T \), there's a risk that the model could converge to a local optimum. \textit{Consequences of Insufficient Fine-tuning:} Conversely, if the mask function undergoes frequent updates, the changes in the model might not align with the alterations in the sparse model structure.

\end{remark}


\begin{remark}
\textbf{Integration of Incremental Regularization and DPF:} 
Differing from the approach in \cite{Wang2021}, which centralizes increasing penalty factors on pre-trained models, FedDIP initiates this from the outset within a distributed learning context. The integration of incremental regularization with DPF offers advantages, primarily because DPF obviates the need for post-pruning fine-tuning, making it preferable to one-shot pruning methods like SNIP. 
\end{remark}

\begin{algorithm}
\caption{The FedDIP Algorithm}
\label{algorithm:feddip}
\begin{algorithmic}[1]
 \renewcommand{\algorithmicrequire}{\textbf{Input: }}
 \renewcommand{\algorithmicensure}{\textbf{Output:}}
\Require $N$ nodes; $T$ global rounds; $E_{l}$ local rounds; initial and target sparsity $s_{0}$ and $s_{p}$; maximum regularization $\lambda_{\max}$; quantization step $Q$; reconfiguration horizon $R$
\Ensure Global pruned DNN model weights $\boldsymbol{\omega}_{G}^{\prime}$ 
\State \textbf{//Server initiliazation}
\If{$s_{0} > 0$}
\State Server initializes global mask $\mathbf{m}_{0}$ (ERK distribution)
\EndIf
\State \textbf{//Node update \& pruning}
\For{global round $\tau = 1, \ldots, T$}
        \State Server randomly selects $K$ nodes $\mathcal{N}_{c} \subset \mathcal{N}$
        \For{selected node $n \in \mathcal{N}_{c}$ \textbf{in parallel}}
        \State Receive pruned weights $\boldsymbol{\omega}_{G}^{\prime(\tau-1)}$ from server node
        \State Obtain mask $\mathbf{m}_{\tau-1}$ from $\boldsymbol{\omega}_{G}^{\prime(\tau-1)}$
        \State Train $\boldsymbol{\omega}_{n}^{\tau}$ over $E_{l}$ rounds on data $\mathcal{D}_{n}$ using (\ref{eq:local_training_dpf}) 
        \If{incremental regularization is chosen}
        \State Optimize (\ref{eq:fed_dip}) with incremental $\lambda_{\tau}$ in (\ref{eq:increase_lambda})
        \Else{}
        \State Optimize (\ref{eq:fedavg_local})
        \EndIf
        \EndFor
        \State \textbf{//Server update, aggregation \& reconfiguration}
        \State Server receives models and aggregates $\boldsymbol{\omega}_{G}^{\tau}$ in (\ref{eq:prune_omega})
        \If{$\tau \mod R == 0$}
        \State Reconfigure global mask $\mathbf{m}_{\tau}$ based on pruning $\boldsymbol{\omega}_{G}^{\tau}$
        \EndIf
        \State Server prunes global model with $\mathbf{m}_{\tau}$ and obtains $\boldsymbol{\omega}_{G}^{\prime(\tau)}$
        \State Server node returns $\boldsymbol{\omega}_{G}^{\prime(\tau)}$ to all nodes.
\EndFor
\end{algorithmic}
\end{algorithm}

\section{Theoretical \& Convergence Analysis}
\label{sec:CA}
\newtheoremstyle{remboldstyle}
  {}{}{\itshape}{}{\bfseries}{.}{.5em}{{\thmname{#1 }}{\thmnumber{#2}}{\thmnote{ (#3)}}}
\theoremstyle{remboldstyle}
\newtheorem{assumption}{Assumption}
\newtheorem{definition}{Definition}
In this section, we provide a theoretical analysis of FedDIP including the convergence Theorem \ref{theorem:1} ensuring stability in training models w.r.t. incremental regularization and dynamic extreme pruning. \textbf{\underline{Note for Proofs}:} \textit{The proofs of our Theorem \ref{theorem:1} and lemmas are in the Appendix \ref{sec:appendix}}

At each global round $t \in \{1, \ldots, T\}$, $K$ out of $N$ nodes participate, each one selected with probability $\rho_{n}$ aligned with \cite{Haddadpour2019},  \cite{Li2019} and $\sum_{n=1}^{N}\rho_{n}=1$. 
Let $\boldsymbol{\omega}_{n}^{t}$ and $\boldsymbol{\omega}_{n}^{\prime (t)}$ be the weights and pruned ones at round $t$ on node $n$, respectively, with 
\begin{equation}
\label{eq:ca:pr_1}
\boldsymbol{\omega}_{n}^{\prime (t)} = \boldsymbol{\omega}_{n}^{t} \odot \mbox{m}^{t}.
\end{equation}
Let also $\mathbf{v}_{n}^{t}$ and $\Tilde{\mathbf{v}}_{n}^{t}$ be the expected and estimated gradients at $t$, respectively, on node $n$. Based on $\boldsymbol{\omega}_{n}^{\prime (t)}$, we obtain: 
$\mathbf{v}_{n}^{\prime(t)}  = \nabla f(\boldsymbol{\omega}_{n}^{\prime (t)})$
while $\Tilde{\mathbf{v}}_{n}^{\prime(t)}$ is the estimated one. The global aggregated model for FedAvg is:
\begin{align}
\label{eq:aggrg_prob}
    \Bar{\boldsymbol{\omega}}^{t}&= \frac{1}{K}\sum_{n\in \mathcal{N}_{c}}\boldsymbol{\omega}^{t}_{n},
\end{align}
while before the server sends the model, it is pruned as
\begin{equation}
        \Bar{\boldsymbol{\omega}}^{\prime(t)} = \frac{1}{K}\sum_{n\in \mathcal{N}_{c}}\boldsymbol{\omega}^{t}_{n}\odot \mbox{m}^{t}.
\end{equation}
The global estimated aggregated gradient and expected global gradient, respectively, are:
\begin{equation}
\label{eq:grad_est}
    \mathbf{\Tilde{{v}}}^{t} = \frac{1}{K}\sum_{n\in \mathcal{N}_{c}} \Tilde{\mathbf{v}}_{n}^{t} \mbox{ and }   \mathbf{\Bar{v}}^{t} = \frac{1}{K}\sum_{n\in \mathcal{N}_{c}} \mathbf{v}_{n}^{t}.
\end{equation}
Similarly, for DPF, we have that:
\begin{equation}
\label{eq:pruned_gradients}
        \mathbf{\Tilde{{v}}}^{\prime(t)} = \frac{1}{K}\sum_{n\in \mathcal{N}_{c}} \Tilde{\mathbf{v}}_{n}^{\prime(t)} \text{ and }   
        \mathbf{\Bar{v}}^{\prime(t)} = \frac{1}{K}\sum_{n\in \mathcal{N}_{c}} \mathbf{v}_{n}^{\prime(t)}.
\end{equation}
In FedAvg, $\Bar{\boldsymbol{\omega}}^{t}$ is updated as:
$\Bar{\boldsymbol{\omega}}^{t+1} = \Bar{\boldsymbol{\omega}}^{t} - \eta_{t}\Tilde{\mathbf{v}}^{t}$,
while the update rule based on DPF at node $n$ is:
\begin{equation}
\label{eq:feddip_local_update}
 \boldsymbol{\omega}^{t+1}_{n} = \boldsymbol{\omega}^{t}_{n} - \eta_{t} \Tilde{\mathbf{v}}_{n}^{\prime(t)},
\end{equation}
where $\boldsymbol{\omega}^{t}_{n} = \boldsymbol{\bar{\omega}}^{\prime(t)}$. Similarly, $\Bar{\boldsymbol{\omega}}^{t+1}$ is updated as:
\begin{equation}\label{eq:feddip_glob}
       \Bar{\boldsymbol{\omega}}^{t+1} = \Bar{\boldsymbol{\omega}}^{\prime(t)} - \eta_{t}\Tilde{\mathbf{v}}^{\prime(t)}.
\end{equation}
\begin{definition}
\label{def:quality_prune}
     According to \cite{Lin2020}, the \textit{quality of pruning} is defined by the parameter $\delta_{t}\in [0,1]$ as:
\begin{equation}\label{eq:quality_prune}
    \delta_{t} := \frac{\|\boldsymbol{\omega}^{t}-\boldsymbol{\omega}^{\prime(t)}\|^{2}_{F}}{\|\boldsymbol{\omega}^{t}\|^{2}_{F}}
\end{equation}
\end{definition}
where $\|.\|^{2}_{F}$ is the square of Frobenius matrix norm. $\delta_{t}$ indicates the degree of information loss by pruning in terms of magnitude. A smaller $\delta_{t}$ stands for less information loss. 
\begin{definition}
\label{def:non_iid}
    Following the Definition $1$ in \cite{ca2021}, a measurement $\gamma$ of non-i.i.d. (non-independent and identically distributed) data is defined as follows:
\begin{equation}\label{eq:lambda_gradient}
    \gamma = \frac{\sum_{n=1}^{N}p_{n}\| \nabla f_{n}(\boldsymbol{\omega})\|^{2}}{\|\sum_{n=1}^{N}p_{n}\nabla f_{n}(\boldsymbol{\omega})\|^{2}},
\end{equation}
with $\gamma\geq 1$; $\gamma=1$ holds in i.i.d case.
\end{definition}

We list our assumptions for proving the convergence of FedDIP in the learning phase. 
\begin{assumption}
\label{assumption:alpha_convex}
\textbf{$L-$Smoothness}.
$,\forall \boldsymbol{\omega}^{t_{1}}, \boldsymbol{\omega}^{t_{2}}\in \mathbb{R}^{d}$, $L \in \mathbb{R}$
\begin{eqnarray*}
    f(\boldsymbol{\omega}^{t_{1}})\leq f(\boldsymbol{\omega}^{t_{2}})+(\boldsymbol{\omega}^{t_{1}}-\boldsymbol{\omega}^{t_{2}})^{\top}\nabla f(\boldsymbol{\omega}^{t_{2}})+\frac{L}{2}\|\boldsymbol{\omega}^{t_{1}}-\boldsymbol{\omega}^{t_{2}}\|^{2}
\end{eqnarray*}
\end{assumption}


\begin{assumption}
\label{assumption:mu_Lipschitz}
\textbf{$\mu-$Lipschitzness}.
$\forall \boldsymbol{\omega}^{t_{1}}, \boldsymbol{\omega}^{t_{2}}\in \mathbb{R}^{d}$ and $\mu \in \mathbb{R}$
\begin{equation}
    \|f(\boldsymbol{\omega}^{t_{1}})-f(\boldsymbol{\omega}^{t_{2}})\|\leq \mu\|\boldsymbol{\omega}^{t_{1}}-\boldsymbol{\omega}^{t_{2}}\|
\end{equation}
\end{assumption}
\begin{assumption}
\label{assumption:bounded_gradient}
\textbf{Bounded variance for gradients}. Following Assumption 3 in \cite{Li2019}, the local model gradients on each node $n$ are self-bounded in variance:
\begin{equation}
    \mathbb{E}[\|\Tilde{\mathbf{v}}_{n}^{t}-\mathbf{v}_{n}^{t}\|^{2}]\leq \sigma^{2}_{n}.
\end{equation}
\end{assumption}
\begin{assumption}\label{assumption_bounded_weighted_gradient}
\textbf{Bounded weighted aggregation of gradients.}
Following Assumption 4 in \cite{ca2021}, the aggregation of local gradients at time $t$ are bounded as:
\begin{equation}
    \|\sum_{n=1}^{N}\rho_{n}\mathbf{v}_{n}^{t}\|^{2}\leq G^{2},
\end{equation}
where $\sum_{n=1}^{N}\rho_{n}=1$ and $\sum_{n=1}^{N}\rho_{n}\mathbf{v}_{n}^{t}$ stands for the weighted aggregation of local gradients; $G \in \mathbb{R}$.
\end{assumption}

\begin{theorem}[FidDIP Convergence]
\label{theorem:1}
Consider the Assumptions \ref{assumption:alpha_convex}, \ref{assumption:mu_Lipschitz} and \ref{assumption:bounded_gradient}, Lemmas \ref{lemma1}, 
\ref{lemma2}, \ref{lemma3}, and let $\eta_{t}=\frac{1}{tL}$, $L>0$. Then, the convergence rate of the FedDIP process is bounded by:
\begin{multline}
\label{eq:theorem:1}
    \frac{1}{T}\sum_{t=1}^{T}\|\nabla f(\boldsymbol{\bar{\omega}}^{\prime(t)})\|^{2}\leq 2L\mathbb{E}(f(\boldsymbol{\omega}_{1})-f^{*}) +\\2L\sum_{t=1}^{T}[\mu\mathbb{E}[\sqrt{\delta_{t+1}}\|\boldsymbol{\bar{\omega}}^{t+1}\|]+\frac{\pi^{2}}{3L^{2}}\chi,
\end{multline}
where $f(\boldsymbol{\omega}_{1})$ and $f^{*}$ stand for the initial loss and the final convergent stable loss, with $\chi=\frac{(\gamma-1)L^{2}+L}{2K}\sum_{n=1}^{N}\rho_{n}\sigma^{2}_{n}+
\frac{(\gamma-1)\gamma E_{l}^{2}L^{2}G^{2}}{2}$, and $\gamma$ defined in Definition \ref{def:non_iid}.
\end{theorem}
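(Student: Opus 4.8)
The plan is to run a standard non-convex descent analysis in which the tracked iterates are the \emph{pruned} global models $\boldsymbol{\bar{\omega}}^{\prime(t)}$, and the pruning error, the local-SGD drift over the $E_{l}$ local rounds, and the partial-participation noise are all pushed into additive error terms. First I would apply the $L$-smoothness Assumption~\ref{assumption:alpha_convex} to the consecutive pair $\boldsymbol{\bar{\omega}}^{t+1}$, $\boldsymbol{\bar{\omega}}^{\prime(t)}$ and substitute the global update rule (\ref{eq:feddip_glob}), i.e. $\boldsymbol{\bar{\omega}}^{t+1}-\boldsymbol{\bar{\omega}}^{\prime(t)}=-\eta_{t}\mathbf{\Tilde{v}}^{\prime(t)}$, obtaining
\[
f(\boldsymbol{\bar{\omega}}^{t+1}) \le f(\boldsymbol{\bar{\omega}}^{\prime(t)}) - \eta_{t}\langle \nabla f(\boldsymbol{\bar{\omega}}^{\prime(t)}), \mathbf{\Tilde{v}}^{\prime(t)}\rangle + \frac{L\eta_{t}^{2}}{2}\|\mathbf{\Tilde{v}}^{\prime(t)}\|^{2}.
\]
Taking expectations, the inner-product term yields $-\eta_{t}\|\nabla f(\boldsymbol{\bar{\omega}}^{\prime(t)})\|^{2}$ plus a discrepancy between $\nabla f(\boldsymbol{\bar{\omega}}^{\prime(t)})$ and the averaged expected pruned gradient $\mathbf{\Bar{v}}^{\prime(t)}$, while the quadratic term splits into $\|\mathbf{\Bar{v}}^{\prime(t)}\|^{2}$ plus the estimation variance. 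These are exactly the quantities Lemmas~\ref{lemma1}--\ref{lemma3} are meant to control: the per-node gradient-estimation variance $\sigma_{n}^{2}$ (Assumption~\ref{assumption:bounded_gradient}), the client drift incurred by $E_{l}$ local updates, and the extra variance from sampling $K$ out of $N$ clients, with the non-i.i.d. factor $\gamma$ of Definition~\ref{def:non_iid} and the aggregation bound $G^{2}$ of Assumption~\ref{assumption_bounded_weighted_gradient} entering through the drift estimate.

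The next step handles the telescope. Because $f(\boldsymbol{\bar{\omega}}^{\prime(t)})$ sits on the right while $f(\boldsymbol{\bar{\omega}}^{t+1})$ sits on the left, the sum does not collapse directly; I would bridge $f(\boldsymbol{\bar{\omega}}^{t+1})$ and $f(\boldsymbol{\bar{\omega}}^{\prime(t+1)})$ using $\mu$-Lipschitzness (Assumption~\ref{assumption:mu_Lipschitz}) together with the pruning-quality Definition~\ref{def:quality_prune}. Since (\ref{eq:quality_prune}) gives $\|\boldsymbol{\bar{\omega}}^{t+1}-\boldsymbol{\bar{\omega}}^{\prime(t+1)}\|_{F}=\sqrt{\delta_{t+1}}\,\|\boldsymbol{\bar{\omega}}^{t+1}\|_{F}$, we obtain $|f(\boldsymbol{\bar{\omega}}^{t+1})-f(\boldsymbol{\bar{\omega}}^{\prime(t+1)})|\le \mu\sqrt{\delta_{t+1}}\,\|\boldsymbol{\bar{\omega}}^{t+1}\|$, which is precisely the middle term of (\ref{eq:theorem:1}). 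After inserting this bridge the telescope closes, leaving $f(\boldsymbol{\bar{\omega}}^{\prime(1)})-f^{*}\le\mathbb{E}(f(\boldsymbol{\omega}_{1})-f^{*})$, the accumulated pruning-gap terms, and the accumulated variance/drift contributions.

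Finally I would isolate $\sum_{t}\eta_{t}\|\nabla f(\boldsymbol{\bar{\omega}}^{\prime(t)})\|^{2}$, plug in $\eta_{t}=\tfrac{1}{tL}$, and use $\sum_{t=1}^{T}\eta_{t}^{2}=\tfrac{1}{L^{2}}\sum_{t=1}^{T}\tfrac{1}{t^{2}}\le\tfrac{\pi^{2}}{6L^{2}}$ (the Basel sum) to collapse the stochastic terms into the single constant $\chi$ carrying the stated $(\gamma-1)$, $E_{l}$, $G$, $K$, $\sigma_{n}$ dependence; lower-bounding $\eta_{t}\ge\eta_{T}=\tfrac{1}{TL}$ on the left turns $\sum_{t}\eta_{t}\|\nabla f\|^{2}$ into $\tfrac{1}{TL}\sum_{t}\|\nabla f\|^{2}$, which after multiplying through yields the $2L$ front constant and the $\tfrac{\pi^{2}}{3L^{2}}\chi$ term. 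The main obstacle I anticipate is keeping the error-feedback structure of DPF from compounding: the state carried between rounds is the dense $\boldsymbol{\bar{\omega}}^{t}$, but gradients are evaluated at the pruned $\boldsymbol{\bar{\omega}}^{\prime(t)}$, so one must ensure the pruning discrepancy resurfaces cleanly as the summable $\mu\sqrt{\delta_{t+1}}\|\boldsymbol{\bar{\omega}}^{t+1}\|$ term rather than leaking into the gradient norms or accumulating geometrically. Simultaneously, separating the non-i.i.d. drift over the $E_{l}$ local epochs from the $K$-of-$N$ sampling noise so that both land inside one constant $\chi$ with exactly the weighting in the statement requires Lemmas~\ref{lemma1}--\ref{lemma3} to be chained in the right order; that bookkeeping, not any single inequality, is where the real effort lies.
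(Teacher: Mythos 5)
Your proposal follows essentially the same route as the paper: the $L$-smooth descent step applied to the pair $(\boldsymbol{\bar{\omega}}^{\prime(t)},\boldsymbol{\bar{\omega}}^{t+1})$ via the update (\ref{eq:feddip_glob}) with the variance/drift/sampling terms controlled as in Lemma~\ref{lemma3}, the $\mu$-Lipschitz bridge $\mu\sqrt{\delta_{t+1}}\|\boldsymbol{\bar{\omega}}^{t+1}\|$ between the dense and pruned iterates exactly as in (\ref{eq:prune+_origin}) and Lemma~\ref{lemma2}, and the final telescoping with $\eta_{t}=\frac{1}{tL}$, the Basel sum, and the lower bound $\eta_{t}\geq\frac{1}{TL}$. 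This matches the paper's proof of Theorem~\ref{theorem:1} in both decomposition and bookkeeping, so it is correct in the same sense and to the same level of rigor as the original argument.
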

\begin{proof}
Refer to `Note for Proofs' at the beginning of this section.
\end{proof}
In Theorem \ref{theorem:1}, the first term of the right-hand side of the inequality (\ref{eq:theorem:1}) denotes the gap between the initial and final loss, while $\chi$ goes to zero as $K \gg 1$ and the i.i.d. case assumption holds. This also suggests that non-i.i.d. case results in large boundaries. The quantity $\frac{1}{T}\sum_{t=1}^{T}\|\nabla f(\boldsymbol{\bar{\omega}}^{\prime(t)})\|^{2}$ is bounded by the loss produced by pruning. Overall, the convergence result shows that the $L_{2}$ norms of the pruned gradients parameters vanish over time, which indicates that a stable model is obtained at the end (recall, a stable gradient vector enables a small change on the model under SGD). 

\section{Experimental Evaluation}
\label{sec:results}
\subsection{Experimental Setup}
\textbf{Datasets and Models:} We experiment with the datasets \textit{Fashion-MNIST} \cite{Xiao2017}, \textit{CIFAR10}, and $\textit{CIFAR100}$ \cite{krizhevsky2009learning}. \textit{Fashion-MNIST} consists of $60,000$ training and $10,000$ test 28x28 grayscale images labeled from 10 classes. Both of \textit{CIFAR} datasets consist of $50,000$ training and $10,000$ test 32x32 color images; in \textit{CIFAR10} and \textit{CIFAR100} there are 10 classes (6000 images per class) and 100 classes (600 images per class), respectively. We consider the i.i.d. (independent and identically distributed) case to compare all the algorithms and extend FedDIP to be applied for non-i.i.d. cases. To test and compare the efficiency of FedDIP, we use different well-known CNN architectures: \textit{LeNet-5}\cite{LeCun2015}, \textit{AlexNet} \cite{Krizhevsky2014} and \textit{Resnet-18} \cite{he2016deep} as backbone (dense or unpruned) models, with the baseline FedAvg \cite{mcmahan2017communication} and the pruning baselines PruneFL \cite{Jiang2022}, PruneTrain \cite{Lym2019}, FedDST \cite{bibikar2022federated}, DPF \cite{Lin2020} (equivalent to FedDP as discussed above), and SNIP \cite{lee2018snip}. For the non-i.i.d. case, we adopt the pathological data partition method in \cite{mcmahan2017communication}, which assigns only two classes for each node. We merge FedDIP with FedProx \cite{MLSYS2020_38af8613}, a generalization and re-parametrization of FedAvg to address the heterogeneity of data (coined FedDIP+Prox), and compare with baseline FedAvg and FedProx. 
Our target is to evaluate FedDIP's accuracy, storage, and communication efficiency in FL environments under extreme sparsity. 

\textbf{Configurations:} Table \ref{table:configuration} details our configurations. For PruneFL, FedDST, and PruneTrain, we experimentally determined the optimal reconfiguration intervals $R$ to be $20$, $20$, and $1$, respectively, to ensure the \textit{best} possible model performance; the same for step size $Q$ for all models. Especially, the annealing factor for FedDST is set as $0.5$.
As SNIP prunes the model before training, the global mask is pruned via one-shot achieving the target sparsity $s_{p}$. We used grid-search to fix the penalty factor for PruneTrain ranging from $10^{-1}$ to $10^{-5}$ for different experiments. When necessary, other hyperparameters were set to match ours. In non-i.i.d. case, the penalty for the proximal term in FedProx is determined via grid-search ranging from $10^{-1}$ to $10^{-5}$. FedDIP+Prox adopts the optimal combination of penalty values for FedDIP and FedProx.
\begin{table*}[h]
\centering
\begin{tabular}{|l||l|l|l|}
\hline
Datasets      & Fashion-MNIST & CIFAR10 & CIFAR100 \\ \hline
DNN/CNN Model         & LeNet-5       & AlexNet & ResNet-18 \\ \hline
Number of pruning layers ($Z$)  & 5       & 8 &  18 \\ \hline
Initial learning rate ($\eta_{0}$)         &       $0.01$        &    $0.1$     &  $0.1$         \\ \hline
Number of clients per round ($K$)          & $5$ (out of $50$)            &  $5$ (out of $50$)     &  $5$ (out of $50$)        \\ \hline
Batchsize in SGD              &    $64$           &     $128$    &     $128$      \\ \hline
Initial sparsity ($s_{0}$)              &      $0.5$         &    $0.5$     &     $0.05$      \\ \hline
Global rounds ($T$)              &   $1,000$            &  $1,000$       &    $1,000$       \\ \hline
Reconfiguration interval ($R$)              &    $5$           &     $5$    &      $5$    \\ \hline
Regularization step size ($Q$)              &     $10$          & $10$        &  $10$         \\ \hline
Local round ($E_{l}$)              &     $5$          &    $5$     &      $5$     \\ \hline
Maximum penalty ($\lambda_{\max}$)              &  $10^{-3}$              &  $10^{-3}$       &    $5\cdot 10^{-3}$      \\ \hline
\end{tabular}
\caption{Configuration Table}
\label{table:configuration}
\end{table*}


\textbf{Hardware:} Our FedDIP framework and experiments are implemented and conducted on \textit{GeForce RTX 3090s} GPUs in the institution's HPC environment.

\subsection{Performance Under Extreme Sparsity}
To demonstrate the performance of FedDIP and other baseline methods under extreme sparsity, we set target $s_{p}=0.9$ for both \textit{Fashion-MNIST} and \textit{CIFAR10} tasks and $s_{p}=0.8$ for the \textit{CIFAR100} task. Notably, as $s_{p}=0.9$ causes divergence during the training of \textit{AlexNet} with SNIP, we 
adjust $s_{p}$ to $0.8$ for SNIP in this particular case.

\subsubsection{Accuracy} Figures \ref{fig:lenet_acc}, \ref{fig:alexnet_acc}, and \ref{fig:resnet18_acc} demonstrate that FedDIP surpasses other baselines in achieving the highest \textit{top-1} accuracy (ratio of the correctly classified images) while maintaining the same extreme target sparsity. 
As indicated in Table \ref{table:model_comparison}, to attain target sparsity of $s_{p}=0.9$ and $s_{p}=0.8$ respectively, FedDIP only compromises \textit{LeNet-5} and \textit{ResNet-18} model accuracy by $1.24\%$ and $1.25\%$, respectively. 
For \textit{AlexNet}, FedDIP can even improve model performance $0.7\%$, compared with FedAvg with $s_{p}=0.9$.

\subsubsection{Cumulative Communication \& Training Cost}
To make a fair comparison of cumulative communication cost during training (amount of information exchanged in MB) 
w.r.t. a fixed budget, we showcase the relationship between communication cost and accuracy. 
Figures \ref{fig:lenet_acc_cost}, \ref{fig:alexnet_acc_cost}, \ref{fig:resnet18_acc_cost}, and specifically 
Table \ref{table:communication_efficiency} present a comprehensive overview, emphasizing that FedDIP, when provided with adequate communication cost (budget),
effectively prunes the model across all experiments outperforming the other models. This indicates the trade-off between model performance and communication/training cost. 
FedDIP demonstrates comparable communication efficiency to other baselines, principally due to the minimal decrement in model performance. Through our experiments, it is evidenced that FedDIP achieves optimally pruned models under 
conditions of extreme sparsity, while incurring less or equivalent communication costs compared to FedAvg. 
Even in the early stages (i.e., in restricted budget cases), FedDIP manages to match the communication efficiency of other pruning methods in the \textit{CIFAR} experiments. This underscores the capacity of our approach to effectively balance model performance and communication expenditure. All in all, FedDIP introduces 
\textit{only} minor computational overhead due to the incremental regularization, while achieving high accuracy compared to baselines. This computational requirement is on par with that of PruneTrain, PruneFL, and SNIP, given the same sparsity at each epoch. A slight increase in computational cost can be justified by the improvements achieved in the final model performance considering extremely high sparsity. The size of the pruned CNN models (Table \ref{table:model_comparison}) has been significantly reduced ($\sim$ 1 order of magnitude) from the un-pruned models in FedAvg. 

\begin{figure}
     \centering
     \begin{subfigure}[b]{0.36\textwidth}
         \centering
         \includegraphics[width=\textwidth]{ 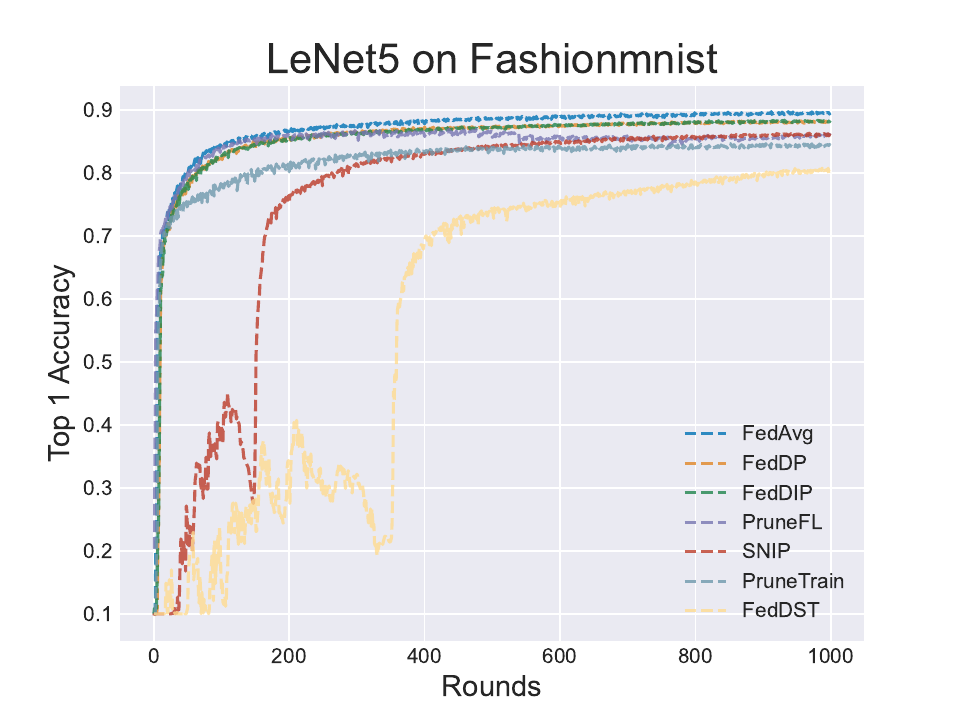}
         \caption{Test accuracy.}
         \label{fig:lenet_acc}
     \end{subfigure}
     \hfill
     \begin{subfigure}[b]{0.36\textwidth}
         \centering
         \includegraphics[width=\textwidth]{ 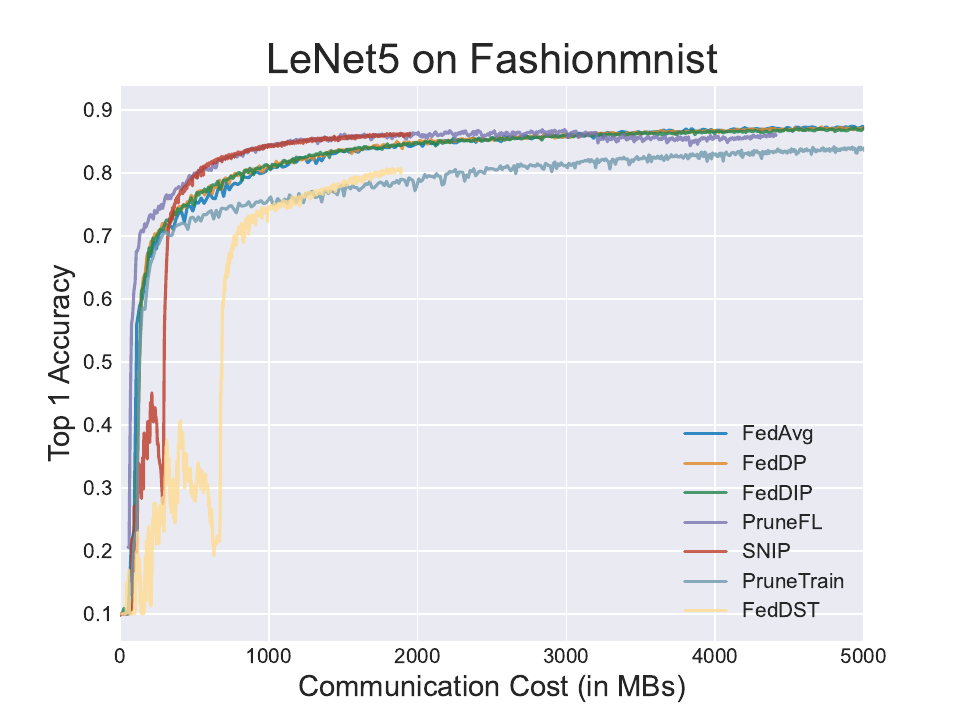}
         \caption{Test accuracy vs. communication budget.}
         \label{fig:lenet_acc_cost}
     \end{subfigure}
\caption{Fashion-MNIST experiment with LeNet-5.}
\label{fig:feddip_lenet5}
\end{figure}

\begin{figure}[h!t]
     \centering
     \begin{subfigure}[b]{0.36\textwidth}
         \centering
         \includegraphics[width=\textwidth]{ 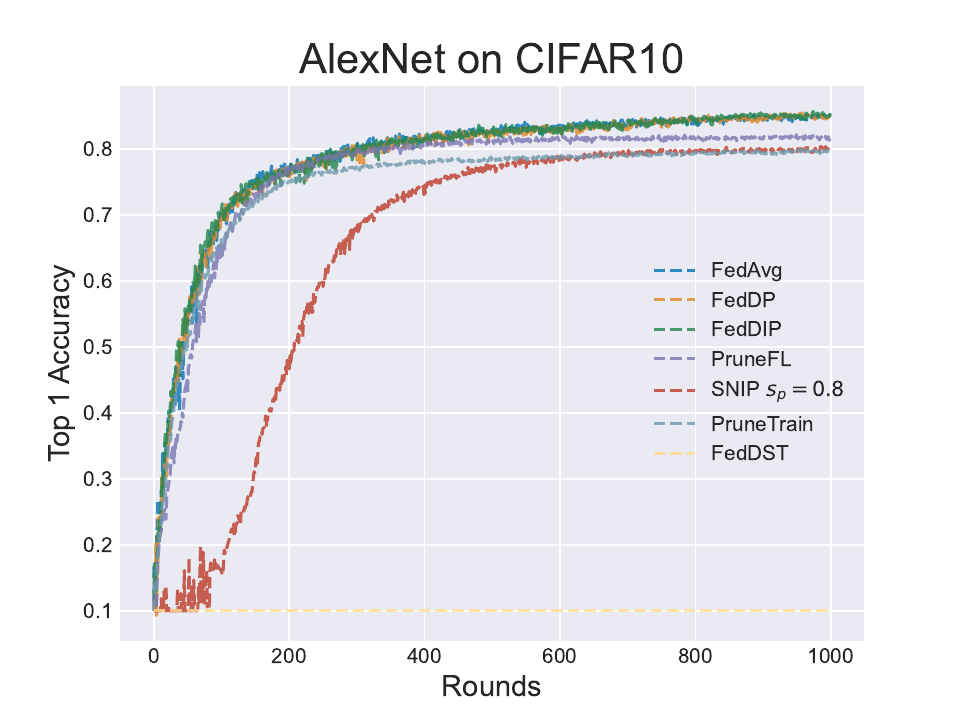}
         \caption{Test accuracy.}
         \label{fig:alexnet_acc}
     \end{subfigure}
     \hfill
     \begin{subfigure}[b]{0.36\textwidth}
         \centering
         \includegraphics[width=\textwidth]{ 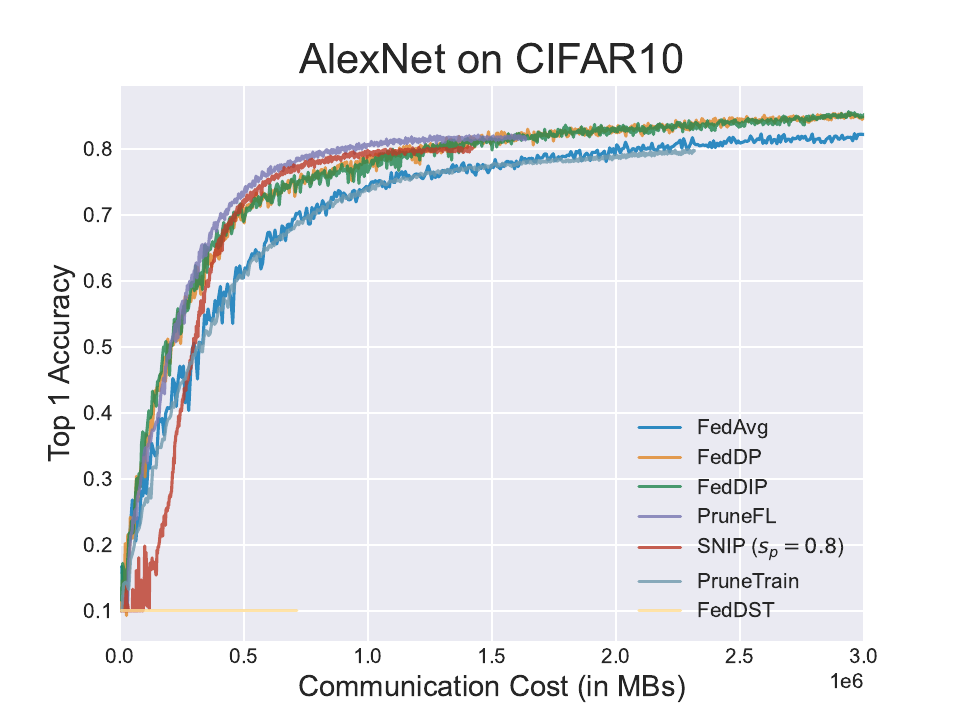}
         \caption{Test accuracy vs. communication budget.}
         \label{fig:alexnet_acc_cost}
     \end{subfigure}
\caption{CIFAR10 experiment with AlexNet.}
\label{fig:feddip_alexnet5}
\end{figure}

\begin{figure}
     \centering
     \begin{subfigure}[b]{0.36\textwidth}
         \centering
         \includegraphics[width=\textwidth]{ 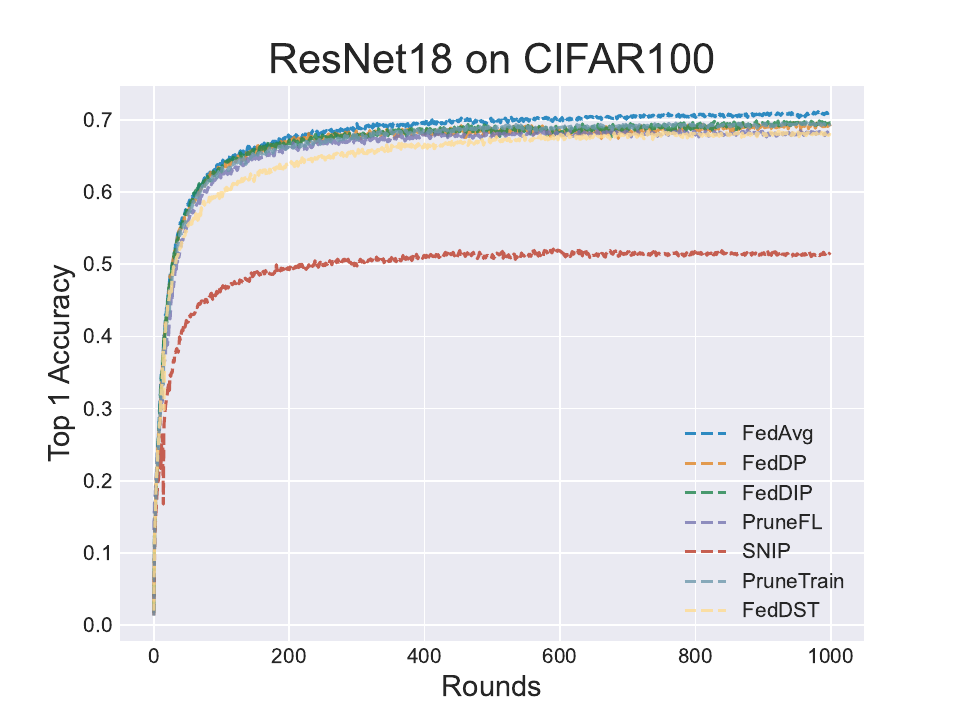}
         \caption{Test accuracy.}
         \label{fig:resnet18_acc}
     \end{subfigure}
     \hfill
     \begin{subfigure}[b]{0.36\textwidth}
         \centering
         \includegraphics[width=\textwidth]{ 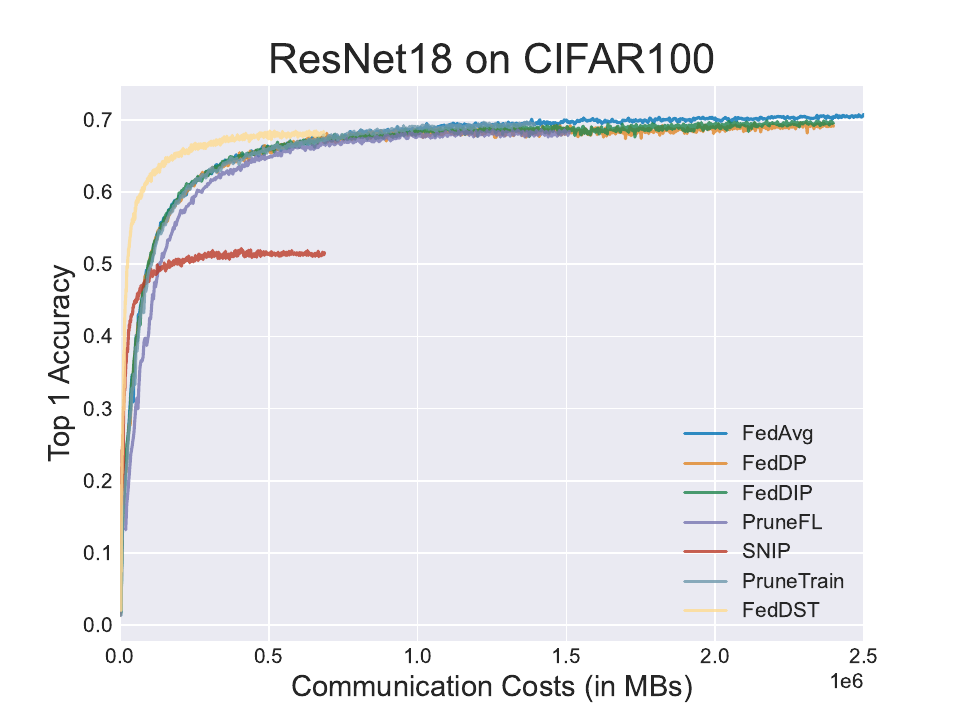}
         \caption{Test accuracy vs. communication budget.}
         \label{fig:resnet18_acc_cost}
     \end{subfigure}
\caption{CIFAR100 experiment with ResNet18.}
\label{fig:feddip_resnet18}
\end{figure}

\subsubsection{Experiments with non-i.i.d. data}
As shown in Table \ref{table:non_iid_results}, our methodology exhibits strong adaptability to FedProx (non-pruning), yielding commendable results on non-i.i.d. data. When juxtaposed with FedAvg, our approach manages to maintain comparable results even after pruning $90\%$ of model parameters, albeit at a slight trade-off of 1-2\% in model accuracy in the experiments with \textit{LeNet-5} and \textit{AlexNet}. Across a span of $T=1000$ rounds, FedDIP emerges as the superior performer in terms of \textit{top-1} accuracy, particularly at sparsity $s_{p}=0.8$ in \textit{ResNet-18}. This comprehensive suite of results underscores the adaptability of FedDIP in effectively managing non-i.i.d. cases, even in extreme sparsity.

\begin{table}[h]
\caption{Test Accuracy (\textit{top-1})}
\begin{center}
\begin{tabular}{|c|c|c|c|}
\hline
 &\multicolumn{3}{|c|}{\textbf{Model Performance ($\%$)$^{\mathrm{1}}$ with target sparsity $s_{p}$}}\\
\hline
\textbf{Model}& \textit{LeNet};$s_{p}=.9$ & \textit{AlexNet};$s_{p}=.9$& \textit{ResNet};$s_{p}=.8$ \\
\hline
\textit{FedAvg}& 89.50 (.09)  & 85.07 (.13)  & 70.92 (.10) \\
\textit{FedDP}    & 88.06 (.08)  & 84.81 (.18) & 69.23 (.14)   \\
\textit{FedDIP}  & \textbf{88.26} (0.09)  & \textbf{85.14} (.22) & \textbf{69.67} (.10)  \\
\textit{PruneFL}   & 86.00 (.10)  & 81.64 (.17) & 68.17 (.20)  \\
\textit{SNIP}   & 86.08 (.15)  & 80.10 (.15) & 51.46 (.11)  \\
\textit{PruneTrain}   & 84.36 (.10)  & 79.73 (.10)& 69.39 (.08)  \\
\textit{FedDST}  &    80.37 (.20)         &      --      & 68.06 (.20)\\
\hline
\# param.(FedAvg) & 62K  & 23.3M & 11.2M  \\
\# param.(pruned)  & \textbf{6.1K} & \textbf{2.3M} & \textbf{2.2M} \\
\hline
\multicolumn{4}{l}{$^{\mathrm{1}}$ Mean accuracy; standard deviation in `()'.}\\
\end{tabular}
\label{table:model_comparison}
\end{center}
\end{table}

\begin{table}[h]
\caption{Communication Efficiency}
\begin{center}
\begin{tabular}{|c|c|c|c|}
\hline
 &\multicolumn{3}{|c|}{\textbf{Model Performance ($\%$) with communication budget}}\\
\hline
\textbf{Case}& \textit{LeNet-5}$^{\mathrm{(1)}}$ & \textit{AlexNet}$^{\mathrm{(2)}}$ & \textit{ResNet-18}$^{\mathrm{(3)}}$  \\
\hline
\textit{FedAvg}& 86.76  & 78.98  & 70.06 \\
\textit{FedDP}    & 86.54  & 82.29 & 69.10    \\
\textit{FedDIP}  & \textbf{86.62}   & \textbf{82.58} & \textbf{69.57}  \\
\textit{PruneFL}   & 85.57  & 81.73  & 68.4  \\
\textit{SNIP}   & 86.32  & 80.11 & 51.63  \\
\textit{PruneTrain}   & 82.68  & 78.16 & 69.42  \\
\textit{FedDST}       &  80.37      &   --   &    68.06   \\        
\hline
\multicolumn{4}{l}{Communication budget $^{\mathrm{(1)}}$$4\cdot10^{3}$MB, $^{\mathrm{(2)}}$$1.8\cdot10^{6}$MB, $^{\mathrm{(3)}}$$2\cdot10^{6}$MB} \\
\end{tabular}
\label{table:communication_efficiency}
\end{center}
\end{table}
\begin{table}[h]
\caption{Extension to non-i.i.d. data}
\begin{center}
\begin{tabular}{|c|c|c|c|}
\hline
 &\multicolumn{3}{|c|}{\textbf{Model Performance$^{\mathrm{1}}$ ($\%$) (non-i.i.d. case)}}\\
\hline
\textbf{Case}& \textit{LeNet-5} & \textit{AlexNet} & \textit{ResNet-18}  \\
\hline
\textit{FedAvg}& 76.42(0.28)  & 61.59 (0.73) & 16.44 (0.49) \\
\textit{FedProx}    & \textbf{76.63}(0.34)  & \textbf{65.74} (0.26) & 18.48 (0.91)   \\
\textit{FedDIP+Prox}  &  74.49 (0.09)  & 60.47 (0.52) & \textbf{19.22} (0.8)  \\
\hline
\multicolumn{4}{l}{$^{\mathrm{1}}$ Mean of the highest five \textit{top-1} test accuracy during $T$ rounds.} \\
\end{tabular}
\label{table:non_iid_results}
\end{center}
\end{table}

\subsection{FedDIP Sparsity Analysis}

\subsubsection{Layerwise sparsity}
Figure \ref{fig:feddip_sparsity_vis} shows the sparsity \textit{per} layer of ResNet-18 ($s_{p} = 0.8$), LeNet-5 ($s_{p}=0.9$), and AlexNet ($s_{p}=0.9$). Notably, the first layers of all models are the least pruned ($0.3 \leq s \leq 0.4$), which is attributed to their significant role in general feature extraction. Furthermore, there is a correlation between the number of weights per layer and the 
corresponding sparsity level. This stems from the initial ERK distribution, which allocates a higher degree of sparsity to layers containing more weights, although we adopt global magnitude pruning in a later process. Such correlation is remarkable in both convolutional and fully-connected layers of the models. 
In convolutional layers, the correlations are found to be perfectly linear for \textit{LeNet-5} with a correlation coefficient $\varrho \simeq 1$, for \textit{AlexNet} we obtain $\varrho = 0.86$, while for \textit{ResNet-18} $\varrho = 0.8$. 
For fully-connected layers, since only one exists in \textit{ResNet-18}, we obtain $\varrho = (0.91, 0.82)$ for \textit{LeNet-5}, \textit{AlexNet}, respectively.  These findings highlight the dependency of layerwise sparsity and the number of weights per layer, reflecting 
the influence of the ERK distribution in FedDIP's initialization.

\begin{figure}[h!]
     \centering
     \begin{subfigure}[b]{0.35\textwidth}
         \centering
         \includegraphics[width=\textwidth]{ 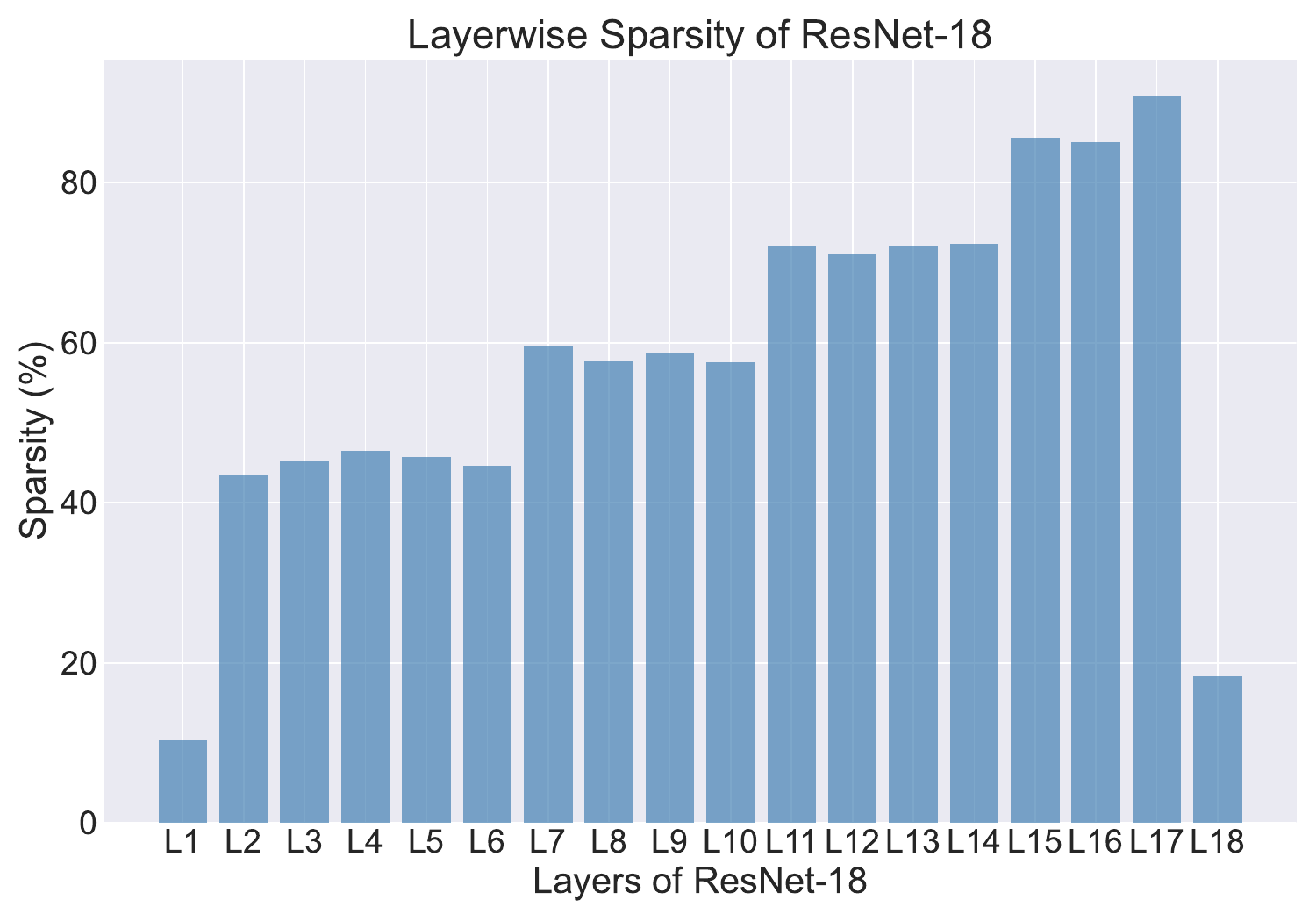}
         \caption{Distribution of layer sparsity; ResNet-18.}
         \label{fig:resnet_sp_visualization}
     \end{subfigure}
     \begin{subfigure}[b]{0.35\textwidth}
         \centering
         \includegraphics[width=\textwidth]{ 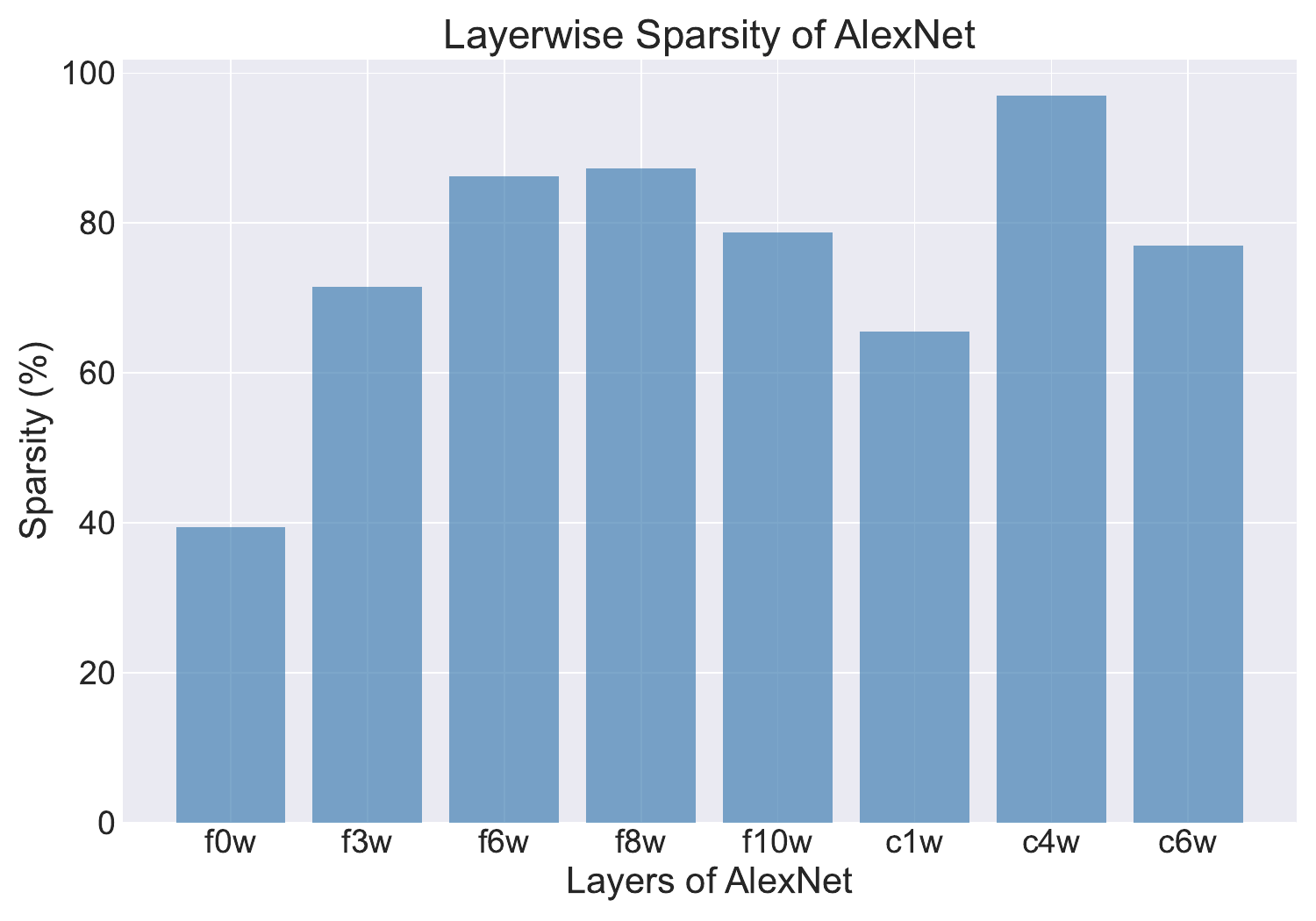}
         \caption{Distribution of layer sparsity; AlexNet.}
         \label{fig:alexnet_sp_visualization}
     \end{subfigure}
     \begin{subfigure}[b]{0.35\textwidth}
         \centering
         \includegraphics[width=\textwidth]{ 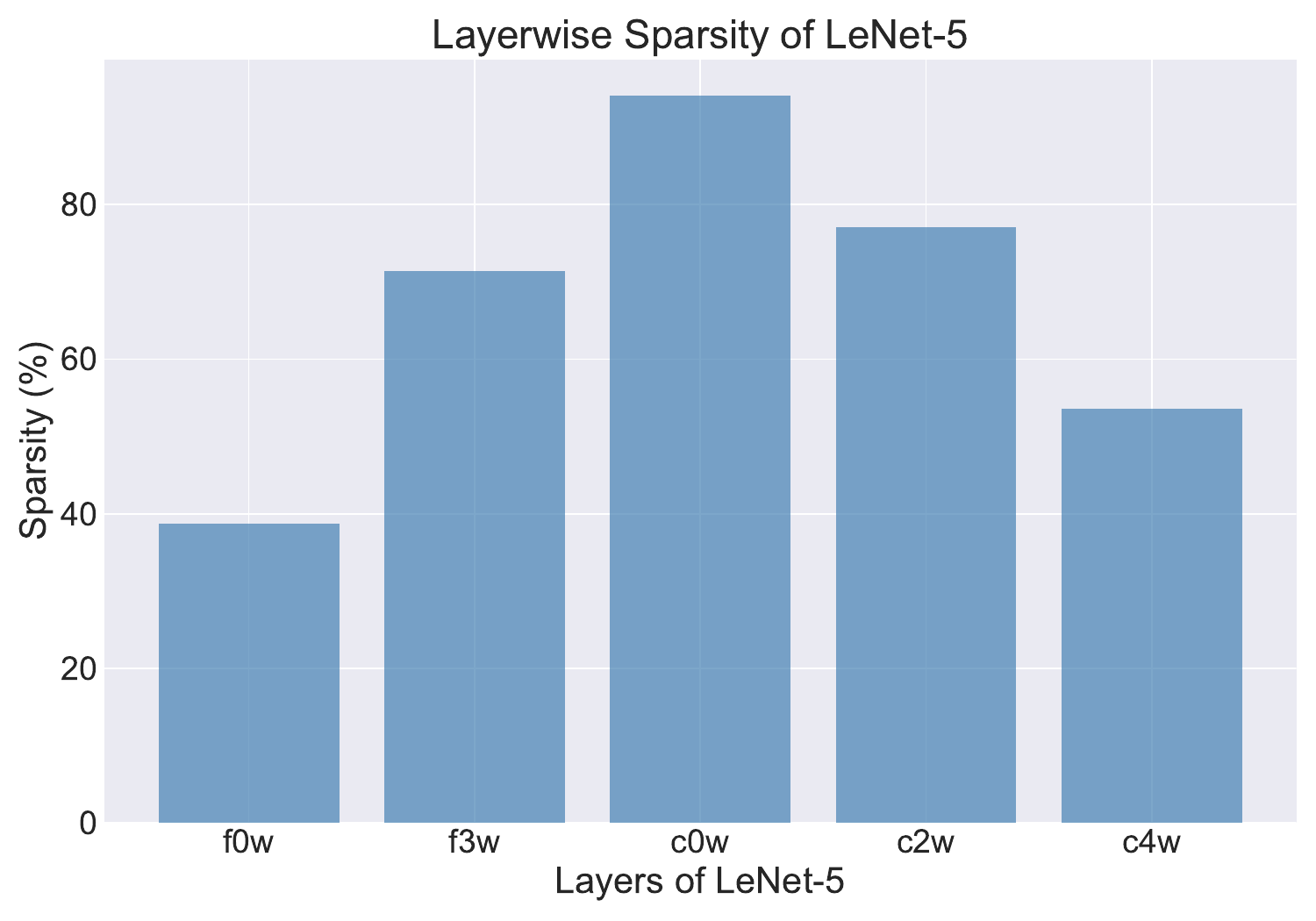}
         \caption{Distribution of layer sparsiy; LeNet-5.}
         \label{fig:lenet_sp_visualization}
     \end{subfigure}
\caption{Layerwise pruning sparsity; \textit{f0w} stands for (\textit{f})eatures layer, layer index (e.g, 0), and (\textit{w})eights, respectively. \textit{c} stands for the fully-connected classifier layer (the same notation is used for other layers). ResNet-18 consists of 18 pruning layers.}
\label{fig:feddip_sparsity_vis}
\end{figure}

\subsubsection{FedDIP in extreme sparsity}
We examine the efficiency of FedDIP under varying conditions of extreme sparsity. 
For \textit{Fashion-MNIST} and \textit{CIFAR10} experiments, we investigate two 
additional extreme sparsity levels $s_{p}=0.95$ and $s_{p}=0.99$, and for \textit{CIFAR100} experiments, we investigate $s_{p}=0.9$ and $s_{p}=0.95$. These conditions provide a robust assessment of FedDIP's performance across a range of extreme sparsity. As shown in Figure \ref{fig:extreme_sparsities}, under extreme sparsity like $0.95$ and $0.99$, the largest drops $\Delta$ in classification accuracy are only $\Delta = 6.97\%$, $\Delta = 5.03\%$, and $\Delta = 8.08\%$, respectively. This also comes with \textit{further} 90\%, 89\%, and 74\% reduction on LeNet-5, Alex-Net, and ResNet-18 model sizes, respectively. 
This indicates (i) FedDIP's efficiency in storing and managing trained and pruned models as well as (ii) efficiency in inference tasks (after training) due to relatively small models. All in all, the pruned DNN models' performance is relatively high with small accuracy drops 
and high model compression (92\%) across different tasks.
\begin{figure}[h!]
     \centering
     \begin{subfigure}[b]{0.57\textwidth}
         \centering
         \includegraphics[width=0.9\textwidth]
         { 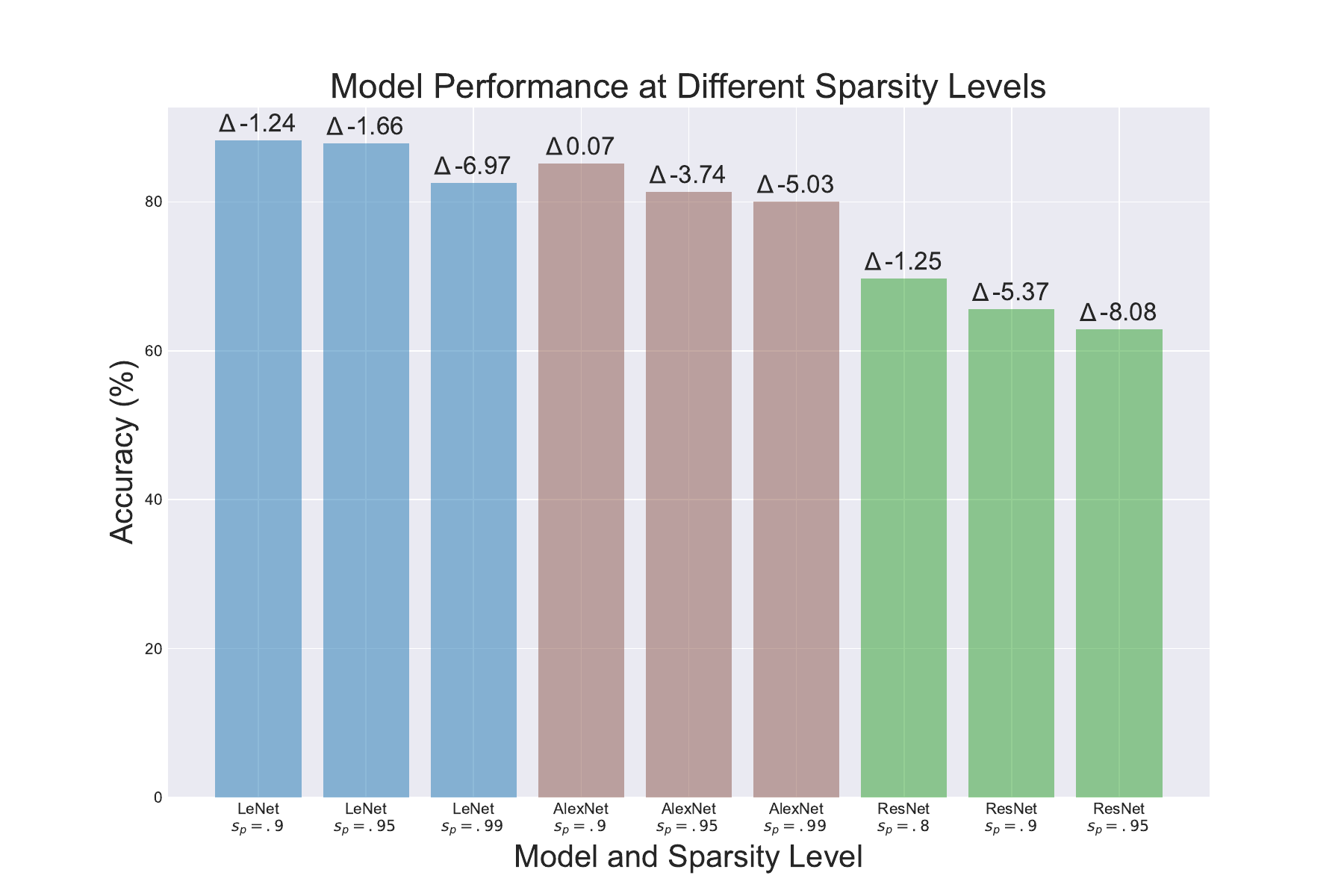}
         \label{extreme_sparsities}
     \end{subfigure}
\caption{FedDIP performance on extreme sparsity values.}
\label{fig:extreme_sparsities}
\end{figure}

\section{Conclusions}
\label{sec:conclusion}
We propose FedDIP, a novel FL framework with dynamic pruning and incremental regularization achieving highly accurate and extremely sparse DNN models. FedDIP gradually regularizes sparse DNN models obtaining extremely compressed models that maintain baseline accuracy and ensure controllable communication overhead. FedDIP is a data-free initialization method based on ERK distribution. We provide a theoretical convergence analysis of FedDIP and evaluate it across different DNN structures. FedDIP achieves comparable and higher accuracy against FL baselines and state-of-the-art FL-based model pruning approaches, respectively, over extreme sparsity using benchmark data sets (i.i.d. \& non-i.i.d. cases). Our agenda includes addressing heterogeneity in personalized FL environments.

\section*{Acknowledgement}
The authors would like to express their sincere gratitude to Dr. Fani Deligianni for her invaluable insights and discussions during peer communications.

This work is partially funded by the EU Horizon Grant `Integration and Harmonization of Logistics Operations' TRACE (\#101104278) and 'National Natural Science Foundation of China' (NSFC) under Grant \#72201093.
\bibliographystyle{IEEEtran}
\bibliography{reuse_in_fl}

\newpage
\appendix

\section{Proof for FedDIP}
\label{sec:appendix}
\newcommand\equalA{\mathrel{\overset{\makebox[0pt]{\mbox{\normalfont\tiny\sffamily {\small \textcircled{\small a}}}}}{=}}}
\newcommand\leqA{\mathrel{\overset{\makebox[0pt]{\mbox{\normalfont\tiny\sffamily {\small \textcircled{\small a}}}}}{\leq}}}
\newcommand\equalB{\mathrel{\overset{\makebox[0pt]{\mbox{\normalfont\tiny\sffamily {\small \textcircled{\small b}}}}}{=}}}
\newcommand\leqB{\mathrel{\overset{\makebox[0pt]{\mbox{\normalfont\tiny\sffamily {\small \textcircled{\small b}}}}}{\leq}}}
\newcommand\equalC{\mathrel{\overset{\makebox[0pt]{\mbox{\normalfont\tiny\sffamily {\small \textcircled{\small c}}}}}{=}}}
\newcommand\leqC{\mathrel{\overset{\makebox[0pt]{\mbox{\normalfont\tiny\sffamily {\small \textcircled{\small c}}}}}{\leq}}}
\newcommand\equalseven{\mathrel{\overset{\makebox[0pt]{\mbox{\normalfont\tiny\sffamily {\small \textcircled{\small 7}}}}}{=}}}
\newcommand\leqeight{\mathrel{\overset{\makebox[0pt]{\mbox{\normalfont\tiny\sffamily {\small \textcircled{\small 8}}}}}{\leq}}}
Before the proof of convergence Theorem~\ref{theorem:1}, we stress that: 
\begin{multline}\label{eq:prune+_origin}
    \mathbb{E}[f(\boldsymbol{\bar{\omega}}^{\prime(t+1)})-f(\boldsymbol{\bar{\omega}}^{\prime(t)})] = \mathbb{E}[f(\boldsymbol{\bar{\omega}}^{\prime(t+1)})]-\mathbb{E}[f(\boldsymbol{\bar{\omega}}^{(t+1)})]\\
    +\mathbb{E}[f(\boldsymbol{\bar{\omega}}^{(t+1)})]-\mathbb{E}[f(\boldsymbol{\bar{\omega}}^{\prime(t)})],
\end{multline}
where mask update only happens at server and $\boldsymbol{\bar{\omega}}^{\prime(t)}$ is the global model received by nodes at time $t$, as the start of the local model training phase.
\begin{lemma}
\label{lemma1}
Given any mask function $\mathbf{m}:=\{0,1\}^{n\times p}$ for pruning, the Frobenius norm of model weight/gradients matrix $\boldsymbol{\omega}$ is greater than or equal to the pruned one $\mathbf{m}\odot \boldsymbol{\omega}$, i.e., 
\begin{equation}\label{lemma1_1}
    \|\boldsymbol{\omega}\|\geq \|\mathbf{m}\odot \boldsymbol{\omega}\|.
\end{equation}
\end{lemma}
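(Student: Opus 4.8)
\textbf{Proof proposal for Lemma~\ref{lemma1}.}

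The plan is to prove the inequality entrywise and then sum. First I would observe that the mask function $\mathbf{m} \in \{0,1\}^{n \times p}$ has every entry equal to either $0$ or $1$, so for each index pair $(i,j)$ we have $(\mathbf{m} \odot \boldsymbol{\omega})_{ij} = m_{ij}\,\omega_{ij}$ with $m_{ij} \in \{0,1\}$. Squaring gives $(\mathbf{m}\odot\boldsymbol{\omega})_{ij}^2 = m_{ij}^2\,\omega_{ij}^2 = m_{ij}\,\omega_{ij}^2 \le \omega_{ij}^2$, since $m_{ij}^2 = m_{ij}$ for binary values and $m_{ij} \le 1$. This entrywise domination is the crux of the argument and is entirely elementary; there is really no ``hard part'' here beyond being careful that the Frobenius norm is exactly the square root of the sum of squared entries.

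Next I would sum the entrywise inequality over all $(i,j)$ to obtain
\begin{equation}
\sum_{i,j} (\mathbf{m}\odot\boldsymbol{\omega})_{ij}^2 \;=\; \sum_{i,j} m_{ij}\,\omega_{ij}^2 \;\le\; \sum_{i,j} \omega_{ij}^2,
\end{equation}
which is precisely $\|\mathbf{m}\odot\boldsymbol{\omega}\|_F^2 \le \|\boldsymbol{\omega}\|_F^2$. Taking square roots of both nonnegative quantities preserves the inequality, yielding $\|\mathbf{m}\odot\boldsymbol{\omega}\|_F \le \|\boldsymbol{\omega}\|_F$, i.e.\ \eqref{lemma1_1}. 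The same argument applies verbatim whether $\boldsymbol{\omega}$ denotes a weight matrix or a gradient matrix, since only the algebraic structure of the Hadamard product with a binary mask is used.

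Finally, I would note that equality holds exactly when $m_{ij} = 1$ for every index with $\omega_{ij} \neq 0$ (i.e.\ the mask prunes only already-zero entries), which connects this lemma to the ``quality of pruning'' parameter $\delta_t$ in Definition~\ref{def:quality_prune}: indeed $\|\boldsymbol{\omega}^t - \boldsymbol{\omega}^{\prime(t)}\|_F^2 = \sum_{i,j}(1-m_{ij})\,\omega_{ij}^2 \ge 0$ and $\|\boldsymbol{\omega}^{\prime(t)}\|_F^2 = \|\boldsymbol{\omega}^t\|_F^2 - \|\boldsymbol{\omega}^t - \boldsymbol{\omega}^{\prime(t)}\|_F^2$, so $\delta_t \in [0,1]$ is well-defined — a fact the later convergence analysis relies on. Since the whole argument is a two-line computation, the only thing to be cautious about is stating clearly that $\|\cdot\|$ throughout denotes the Frobenius norm (as the lemma's hypothesis implies) and that $\boldsymbol{\omega}$ and $\mathbf{m}$ have matching dimensions so the Hadamard product is defined.
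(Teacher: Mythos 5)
Your proof is correct and follows essentially the same route as the paper: entrywise domination $m_{ij}^2\omega_{ij}^2 \le \omega_{ij}^2$ for a binary mask, summed over all entries and combined with the definition of the Frobenius norm (the paper merely phrases this via the trace identity $\|\mathbf{m}\odot\boldsymbol{\omega}\|^2 = \mathrm{Tr}([\mathbf{m}\odot\boldsymbol{\omega}]^{T}[\mathbf{m}\odot\boldsymbol{\omega}])$ before reducing to the same sum of squares). Your additional remarks on the equality case and on why $\delta_t\in[0,1]$ are consistent with the paper's stated use of the lemma.
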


\begin{proof}
According to the definition of Frobenius norm, we have
\begin{multline*}
     \|\mathbf{m}\odot \boldsymbol{\omega}\|^{2} = Tr([\mathbf{m}\odot \boldsymbol{\omega}]^{T}\cdot[\mathbf{m}\odot \boldsymbol{\omega}]) \\
     = \sum_{i=1}^{n}\sum_{j=1}^{p}\lvert m_{ij}\boldsymbol{\omega}_{ij}\rvert^{2}\leq \sum_{i=1}^{n}\sum_{j=1}^{p}\lvert \boldsymbol{\omega}_{ij}\rvert^{2}=\|\boldsymbol{\omega}\|^{2}.
\end{multline*}
\end{proof}

\textit{Lemma} \ref{lemma1} provides the foundation that the quality of pruning $\delta_{t}$, respectively, are in the range $[0,1]$.
\begin{lemma}
\label{lemma2}
Given the Definition \ref{def:quality_prune} and Assumption \ref{assumption:mu_Lipschitz}, the effect of pruning on pruned model weights at server ($\delta_{t+1}$) is bounded as:
\begin{align}
            \mathbb{E}[f(\boldsymbol{\bar{\omega}}^{\prime(t+1)})]-\mathbb{E}[f(\boldsymbol{\bar{\omega}}^{t+1})]\leq \mu\mathbb{E}[\sqrt{\delta_{t+1}}\|\boldsymbol{\bar{\omega}}^{t+1}\|]
\end{align}
\end{lemma}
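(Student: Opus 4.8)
\textbf{Proof proposal for Lemma~\ref{lemma2}.}
The plan is to bound the difference $\mathbb{E}[f(\boldsymbol{\bar{\omega}}^{\prime(t+1)})] - \mathbb{E}[f(\boldsymbol{\bar{\omega}}^{t+1})]$ directly through the $\mu$-Lipschitzness of $f$ (Assumption~\ref{assumption:mu_Lipschitz}) and then re-express the resulting distance $\|\boldsymbol{\bar{\omega}}^{\prime(t+1)} - \boldsymbol{\bar{\omega}}^{t+1}\|$ in terms of the pruning-quality parameter $\delta_{t+1}$ from Definition~\ref{def:quality_prune}. First I would apply Assumption~\ref{assumption:mu_Lipschitz} pointwise (inside the expectation) to get
\begin{equation*}
f(\boldsymbol{\bar{\omega}}^{\prime(t+1)}) - f(\boldsymbol{\bar{\omega}}^{t+1}) \leq \|f(\boldsymbol{\bar{\omega}}^{\prime(t+1)}) - f(\boldsymbol{\bar{\omega}}^{t+1})\| \leq \mu\,\|\boldsymbol{\bar{\omega}}^{\prime(t+1)} - \boldsymbol{\bar{\omega}}^{t+1}\|,
\end{equation*}
and then take expectations on both sides, using monotonicity of $\mathbb{E}[\cdot]$.

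The second step is to recognize that $\boldsymbol{\bar{\omega}}^{\prime(t+1)} = \boldsymbol{\bar{\omega}}^{t+1} \odot \mathbf{m}^{t+1}$ (the server-side pruning step, cf.\ (\ref{eq:ca:pr_1}) applied to the aggregated model), so that $\boldsymbol{\bar{\omega}}^{\prime(t+1)} - \boldsymbol{\bar{\omega}}^{t+1} = -(\boldsymbol{\bar{\omega}}^{t+1} - \boldsymbol{\bar{\omega}}^{\prime(t+1)})$ and hence $\|\boldsymbol{\bar{\omega}}^{\prime(t+1)} - \boldsymbol{\bar{\omega}}^{t+1}\| = \|\boldsymbol{\bar{\omega}}^{t+1} - \boldsymbol{\bar{\omega}}^{\prime(t+1)}\|$. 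By Definition~\ref{def:quality_prune} written at round $t+1$, $\|\boldsymbol{\bar{\omega}}^{t+1} - \boldsymbol{\bar{\omega}}^{\prime(t+1)}\|^{2}_{F} = \delta_{t+1}\,\|\boldsymbol{\bar{\omega}}^{t+1}\|^{2}_{F}$, so taking square roots gives $\|\boldsymbol{\bar{\omega}}^{t+1} - \boldsymbol{\bar{\omega}}^{\prime(t+1)}\|_{F} = \sqrt{\delta_{t+1}}\,\|\boldsymbol{\bar{\omega}}^{t+1}\|_{F}$. Substituting this into the Lipschitz bound and taking expectations yields
\begin{equation*}
\mathbb{E}[f(\boldsymbol{\bar{\omega}}^{\prime(t+1)})] - \mathbb{E}[f(\boldsymbol{\bar{\omega}}^{t+1})] \leq \mu\,\mathbb{E}\big[\sqrt{\delta_{t+1}}\,\|\boldsymbol{\bar{\omega}}^{t+1}\|\big],
\end{equation*}
which is exactly the claim.

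The only subtlety — and the main thing to get right rather than a genuine obstacle — is the bookkeeping around the expectation: $\delta_{t+1}$ is itself a random variable (it depends on the aggregated weights and on the stochastic gradients that produced $\boldsymbol{\bar{\omega}}^{t+1}$, as well as on the random client subset $\mathcal{N}_{c}$), so the factor $\sqrt{\delta_{t+1}}\|\boldsymbol{\bar{\omega}}^{t+1}\|$ cannot be pulled outside $\mathbb{E}[\cdot]$, and the statement correctly keeps it inside. One must therefore be careful to apply Lipschitzness before taking any expectation and only invoke linearity/monotonicity of $\mathbb{E}[\cdot]$ afterwards; no independence or factorization is needed. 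I would also note in passing that Lemma~\ref{lemma1} guarantees $\delta_{t+1} \in [0,1]$, so $\sqrt{\delta_{t+1}}$ is well defined and the bound is non-vacuous. Everything else is a direct substitution, so I expect the proof to be short.
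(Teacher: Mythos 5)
Your proposal is correct and follows essentially the same route as the paper's proof: apply the $\mu$-Lipschitzness of $f$ to bound the expected difference by $\mu\,\mathbb{E}[\|\boldsymbol{\bar{\omega}}^{\prime(t+1)}-\boldsymbol{\bar{\omega}}^{t+1}\|]$ and then rewrite that norm as $\sqrt{\delta_{t+1}}\,\|\boldsymbol{\bar{\omega}}^{t+1}\|$ via Definition~\ref{def:quality_prune}. Your additional remarks on keeping $\sqrt{\delta_{t+1}}\|\boldsymbol{\bar{\omega}}^{t+1}\|$ inside the expectation and on $\delta_{t+1}\in[0,1]$ (via Lemma~\ref{lemma1}) are sound elaborations of steps the paper leaves implicit.
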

\begin{proof}
\begin{multline}\label{eq:lemma2_1}
        \mathbb{E}[f(\boldsymbol{\bar{\omega}}^{\prime(t+1)})]-\mathbb{E}[f(\boldsymbol{\bar{\omega}}^{t+1})]\leq \mu\mathbb{E}[\|\boldsymbol{\bar{\omega}}^{\prime(t+1)}-\boldsymbol{\bar{\omega}}^{t+1}\|]\\
        = \mu\mathbb{E}[\sqrt{\delta_{t+1}}\|\boldsymbol{\bar{\omega}}^{t+1}\|].
\end{multline}
\end{proof}

\begin{lemma}\label{lemma3}
Under the definitions provided in Section \ref{sec:CA} and Assumptions $1$ \& $4$, $\mathbb{E}[f(\boldsymbol{\bar{\omega}}^{t+1})]-\mathbb{E}[f(\boldsymbol{\bar{\omega}}^{\prime(t)})]$ is bounded by:
\begin{multline}\label{eq:lemma3_0}
   \mathbb{E}[f(\boldsymbol{\bar{\omega}}^{t+1})]-\mathbb{E}[f(\boldsymbol{\bar{\omega}}^{\prime(t)})]
   \leq \frac{(\gamma-1)L^{2}\eta_{t}^{2}+\eta_{t}^{2}L}{2K}\sum_{n=1}^{N}\rho_{n}\sigma^{2}_{n}+\\
\frac{(\gamma-1)\gamma E_{l}\eta_{t}^{2}L^{2}}{2}\sum_{k=t_{c}+1}^{t_{c}+E_{l}}\|\sum_{n=1}^{N}\rho_{n}\mathbf{v}_{n}^{\prime(k)}\|^{2}
-\frac{\eta_{t}}{2}\|\nabla f(\boldsymbol{\bar{\omega}}^{\prime(t)})\|^{2}+\\
\frac{\gamma\eta_{t}^{2}L-\eta_{t}}{2}\|\sum_{n=1}^{N}\rho_{n}\Tilde{\mathbf{v}}_{n}^{\prime(t)}\|^{2}.
\end{multline}
\end{lemma}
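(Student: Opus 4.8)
The plan is to prove this one-round descent estimate directly from the $L$-smoothness Assumption~\ref{assumption:alpha_convex}, applied to the consecutive global iterates $\boldsymbol{\bar{\omega}}^{\prime(t)}$ and $\boldsymbol{\bar{\omega}}^{t+1}$, whose displacement is fixed by the global DPF update~(\ref{eq:feddip_glob}), i.e. $\boldsymbol{\bar{\omega}}^{t+1}-\boldsymbol{\bar{\omega}}^{\prime(t)}=-\eta_{t}\Tilde{\mathbf{v}}^{\prime(t)}$. Substituting this displacement into the smoothness inequality and taking expectation yields
\begin{equation*}
\mathbb{E}[f(\boldsymbol{\bar{\omega}}^{t+1})]-\mathbb{E}[f(\boldsymbol{\bar{\omega}}^{\prime(t)})]\leq -\eta_{t}\,\mathbb{E}[\langle\Tilde{\mathbf{v}}^{\prime(t)},\nabla f(\boldsymbol{\bar{\omega}}^{\prime(t)})\rangle]+\tfrac{L\eta_{t}^{2}}{2}\,\mathbb{E}[\|\Tilde{\mathbf{v}}^{\prime(t)}\|^{2}],
\end{equation*}
so the whole task reduces to bounding the linear and quadratic terms so that the four summands of the claim appear. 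Although the statement lists only Assumptions~\ref{assumption:alpha_convex} and~\ref{assumption_bounded_weighted_gradient}, the presence of $\sigma_{n}^{2}$ and $\gamma$ in the bound means I would additionally invoke the bounded-variance Assumption~\ref{assumption:bounded_gradient} and the non-i.i.d. measure of Definition~\ref{def:non_iid}.

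For the linear term I would apply the polarization identity $-\langle a,b\rangle=\tfrac12\|a-b\|^{2}-\tfrac12\|a\|^{2}-\tfrac12\|b\|^{2}$ with $a=\Tilde{\mathbf{v}}^{\prime(t)}$ and $b=\nabla f(\boldsymbol{\bar{\omega}}^{\prime(t)})$. Since $\nabla f(\boldsymbol{\bar{\omega}}^{\prime(t)})$ is deterministic at the round start, this isolates the genuine descent term $-\tfrac{\eta_{t}}{2}\|\nabla f(\boldsymbol{\bar{\omega}}^{\prime(t)})\|^{2}$, a negative quadratic $-\tfrac{\eta_{t}}{2}\mathbb{E}\|\Tilde{\mathbf{v}}^{\prime(t)}\|^{2}$ which I would lower-bound by $\|\sum_{n}\rho_{n}\Tilde{\mathbf{v}}_{n}^{\prime(t)}\|^{2}$ via Jensen over the $\rho_{n}$-sampling, and a gradient-mismatch residual $\tfrac{\eta_{t}}{2}\mathbb{E}\|\Tilde{\mathbf{v}}^{\prime(t)}-\nabla f(\boldsymbol{\bar{\omega}}^{\prime(t)})\|^{2}$. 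For the quadratic term $\tfrac{L\eta_{t}^{2}}{2}\mathbb{E}\|\Tilde{\mathbf{v}}^{\prime(t)}\|^{2}$ I would split $\Tilde{\mathbf{v}}^{\prime(t)}$ into its mean and a zero-mean fluctuation: Assumption~\ref{assumption:bounded_gradient} with the $\rho_{n}$-weighted sampling bounds the fluctuation by $\tfrac{1}{K}\sum_{n}\rho_{n}\sigma_{n}^{2}$ (supplying the $\tfrac{\eta_{t}^{2}L}{2K}\sum_{n}\rho_{n}\sigma_{n}^{2}$ contribution), while Jensen together with Definition~\ref{def:non_iid} converts the mean part into $\gamma\|\sum_{n}\rho_{n}\Tilde{\mathbf{v}}_{n}^{\prime(t)}\|^{2}$. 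Merging this with the negative quadratic from the linear term produces the $\tfrac{\gamma\eta_{t}^{2}L-\eta_{t}}{2}\|\sum_{n}\rho_{n}\Tilde{\mathbf{v}}_{n}^{\prime(t)}\|^{2}$ summand.

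The remaining gradient-mismatch residual is the crux, and it is the only place the multi-step/heterogeneity structure enters. Here $\Tilde{\mathbf{v}}^{\prime(t)}$ aggregates pruned gradients evaluated at the \emph{drifted} local iterates, whereas $\nabla f(\boldsymbol{\bar{\omega}}^{\prime(t)})$ is taken at the common received model. I would therefore unroll the $E_{l}$ local SGD steps~(\ref{eq:local_training_dpf}): each node's iterate departs from the common model $\boldsymbol{\bar{\omega}}^{\prime(t_{c})}$ by an amount controlled, through $L$-smoothness ($\|\nabla f(x)-\nabla f(y)\|^{2}\leq L^{2}\|x-y\|^{2}$), by the accumulation of its local pruned gradients over $k=t_{c}+1,\dots,t_{c}+E_{l}$. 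Passing once more from node-wise squared gradients to the aggregate via Definition~\ref{def:non_iid} yields both the residual variance term carrying the $(\gamma-1)L^{2}$ factor and the $(\gamma-1)\gamma E_{l}\eta_{t}^{2}L^{2}$-weighted sum $\sum_{k=t_{c}+1}^{t_{c}+E_{l}}\|\sum_{n}\rho_{n}\mathbf{v}_{n}^{\prime(k)}\|^{2}$, with Assumption~\ref{assumption_bounded_weighted_gradient} keeping these accumulated gradient norms finite.

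The main obstacle will be exactly this client-drift analysis under non-i.i.d. data: tracking how the per-step model deviations feed back through smoothness, keeping the powers of $\eta_{t}$ and the $\gamma$ versus $(\gamma-1)$ factors consistent, and being careful about which squared-norm terms are evaluated at the stochastic gradients $\Tilde{\mathbf{v}}$ and which at the expected gradients $\mathbf{v}$. Once the descent term, the two variance contributions (collected into the first summand), the merged aggregate-gradient term, and the local-step drift sum are assembled, the claimed inequality follows by collecting like terms.
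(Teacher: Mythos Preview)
Your plan is correct and follows essentially the same route as the paper: start from $L$-smoothness with the displacement $-\eta_t\Tilde{\mathbf{v}}^{\prime(t)}$, handle the inner product via the polarization identity, bound $\mathbb{E}\|\Tilde{\mathbf{v}}^{\prime(t)}\|^2$ by a bias--variance split (Assumption~\ref{assumption:bounded_gradient} for the $\sigma_n^2/K$ part, Definition~\ref{def:non_iid} for the $\gamma$ factor), and control the gradient-mismatch residual by unrolling the local steps from $t_c$ and invoking $L$-smoothness together with the drift bound from \cite{ca2021}. The only cosmetic difference is that the paper first passes the client-sampling expectation through the inner product (so polarization is applied to $\sum_n\rho_n\Tilde{\mathbf{v}}_n^{\prime(t)}$ rather than to $\Tilde{\mathbf{v}}^{\prime(t)}$), which spares the extra Jensen step you describe for the negative quadratic; your ordering works just as well. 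Your observation that Assumption~\ref{assumption:bounded_gradient} and Definition~\ref{def:non_iid} are needed beyond what the lemma statement lists is also accurate.
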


\begin{proof}
Firstly, according to the Assumption \ref{assumption:alpha_convex}, we have
\begin{multline}\label{eq:lemma3_1}
    \mathbb{E}[f(\boldsymbol{\bar{\omega}}^{t+1})]-\mathbb{E}[f(\boldsymbol{\bar{\omega}}^{\prime(t)})]\\
    \leq \mathbb{E}[<\boldsymbol{\bar{\omega}}^{t+1}-\boldsymbol{\bar{\omega}}^{\prime(t)},\nabla f(\boldsymbol{\bar{\omega}}^{\prime(t)})>]+\frac{L}{2}\mathbb{E}\|\boldsymbol{\bar{\omega}}^{t+1}-\boldsymbol{\bar{\omega}}^{\prime(t)}\|^{2}\\
    \equalA{\frac{\eta_{t}^{2}L}{2}\mathbb{E}\|\Tilde{\mathbf{v}}^{\prime(t)}\|^{2}}- \mathbb{E}[<\eta_{t}\Tilde{\mathbf{v}}^{\prime(t)},\nabla f(\boldsymbol{\bar{\omega}}^{\prime(t)})>],
\end{multline}
where $\equalA{}$ holds because of (\ref{eq:feddip_glob}). Following the proof structure in \cite{ca2021}, we give the boundary for $- \mathbb{E}[<\eta_{t}\Tilde{\mathbf{v}}^{\prime(t)},\nabla f(\boldsymbol{\bar{\omega}}^{\prime(t)})>]$ and $\mathbb{E}\|\Tilde{\mathbf{v}}^{\prime(t)}\|^{2}$, respectively. 

According to the variance formula and the definition of $\Bar{\mathbf{v}}^{\prime(t)}$, we can expand $\mathbb{E}\|\Tilde{\mathbf{v}}^{\prime(t)}\|^{2}$ as follows:
\begin{multline}\label{eq:lemma3_3}
    \mathbb{E}\|\Tilde{\mathbf{v}}^{\prime(t)}\|^{2} = \mathbb{E}\|\Tilde{\mathbf{v}}^{\prime(t)}-\mathbb{E}(\Tilde{\mathbf{v}}^{\prime(t)})\|^{2}+[\mathbb{E}(\Tilde{\mathbf{v}}^{\prime(t)})]^{2}\\
    =\mathbb{E}\|\Tilde{\mathbf{v}}^{\prime(t)}-\Bar{\mathbf{v}}^{\prime(t)}\|^{2} +\|\Bar{\mathbf{v}}^{\prime(t)}\|^{2}.
\end{multline}
Recall that $P_{t}$ is the selection probability of clients, thus, one can obtain:
\begin{multline}\label{eq:lemma3_4}
    \|\Bar{\mathbf{v}}^{\prime(t)}\|^{2} = \left\|\frac{1}{K}\mathbb{E}_{P_{t}}[\sum_{n\in P_{t}}\mathbf{v}_{n}^{\prime(t)}]\right\|^{2}
    \leq \frac{1}{K}\mathbb{E}_{P_{t}}\left[\sum_{n\in P_{t}}\|\mathbf{v}_{n}^{\prime(t)}\|^{2}\right]
\end{multline}
The last inequality holds due to \textit{Jenson's Inequality}.
Similarly, based on the definitions in (\ref{eq:pruned_gradients}), we have:
\begin{multline}\label{eq:lemma3_5}
    \mathbb{E}[\|\Tilde{\mathbf{v}}^{\prime(t)}-\Bar{\mathbf{v}}^{\prime(t)}\|^{2}] = \mathbb{E}[\|\mathbb{E}_{P_{t}}[\frac{1}{K}\sum_{n \in P_{t}}\Tilde{\mathbf{v}}_{n}^{\prime(t)}-\frac{1}{K}\sum_{n \in P_{t}}\mathbf{v}_{n}^{\prime(t)}]\|^{2}]\\
    = \frac{1}{K^{2}}\{\mathbb{E}[\mathbb{E}_{P_{t}}[\sum_{n\in P_{t}}\|\Tilde{\mathbf{v}}_{n}^{\prime(t)}-\mathbf{v}_{n}^{\prime(t)}\|^{2}]]\\
    + \sum_{i\not =j}<\Tilde{\mathbf{v}}_{i}^{\prime(t)}-\mathbf{v}_{i}^{\prime(t)},\Tilde{\mathbf{v}}_{j}^{\prime(t)}-\mathbf{v}_{j}^{\prime(t)}>\}\\
    = \frac{1}{K}\mathbb{E}[\sum_{n=1 }^{N}\rho_{n}\|\Tilde{\mathbf{v}}_{n}^{\prime(t)}-\mathbf{v}_{n}^{\prime(t)}\|^{2}].
\end{multline}
Substituting \eqref{eq:lemma3_4} and \eqref{eq:lemma3_5} into \eqref{eq:lemma3_3}, and according to the \textit{Lemma} $1$, Definition \ref{def:non_iid} and Assumption $4$, we obtain that: 
\begin{multline}
\label{eq:lemma3_6}
\mathbb{E}\|\Tilde{\mathbf{v}}^{\prime(t)}\|^{2} \leq  \frac{1}{K}\mathbb{E}[\sum_{n\in P_{t}}\|\mathbf{v}_{n}^{\prime(t)}\|^{2}] + \frac{1}{K^{2}}\mathbb{E}[\sum_{n\in P_{t} }\|\Tilde{\mathbf{v}}_{n}^{\prime(t)}-\mathbf{v}_{n}^{\prime(t)}\|^{2}]\\
\leq \sum_{n=1}^{N}\rho_{n}\|\mathbf{v}_{n}^{\prime(t)}\|^{2}+\frac{1}{K}\mathbb{E}[\sum_{n=1 }^{N}\|\Tilde{\mathbf{v}}_{n}^{\prime(t)}-\mathbf{v}_{n}^{\prime(t)}\|^{2}]\\
\leq \sum_{n=1}^{N}\rho_{n}\|\mathbf{v}_{n}^{\prime(t)}\|^{2} + \frac{1}{K}\sum_{n=1}^{N}\rho_{n}\sigma_{n}^{2}\\
\leq \gamma \sum_{n=1}^{N}\|\rho_{n}\mathbf{v}_{n}^{\prime(t)}\|^{2} + \frac{1}{K}\sum_{n=1}^{N}\rho_{n}\sigma_{n}^{2}.
\end{multline}
Next, we provide the boundary for $- \mathbb{E}[<\eta_{t}\Tilde{\mathbf{v}}^{\prime(t)},\nabla f(\boldsymbol{\bar{\omega}}^{\prime(t)})>]$ . 
\begin{multline}
\label{eq:lemma3_7}
- \mathbb{E}[\langle\Tilde{\mathbf{v}}^{\prime(t)},\nabla f(\boldsymbol{\bar{\omega}}^{\prime(t)})\rangle] = - \langle\mathbb{E}[\frac{1}{K}\sum_{n\in P_{t}}\Tilde{\mathbf{v}}_{n}^{\prime(t)}],\nabla f(\boldsymbol{\bar{\omega}}^{\prime(t)})\rangle \\
= - \langle\mathbb{E}[\sum_{n=1}^{N}\rho_{n}\Tilde{\mathbf{v}}_{n}^{\prime(t)}],\nabla f(\boldsymbol{\bar{\omega}}^{\prime(t)})\rangle\\
\equalA -\frac{1}{2}\|\nabla f(\boldsymbol{\bar{\omega}}^{\prime(t)})\|^{2}-\frac{1}{2}\|\sum_{n=1}^{N}\rho_{n}\Tilde{\mathbf{v}}_{n}^{\prime(t)}\|^{2}+\frac{1}{2}\|\nabla f(\boldsymbol{\bar{\omega}}^{\prime(t)})-\sum_{n=1}^{N}\rho_{n}\Tilde{\mathbf{v}}_{n}^{\prime(t)}\|^{2}\\
= -\frac{1}{2}\|\nabla f(\boldsymbol{\bar{\omega}}^{\prime(t)})\|^{2}-\frac{1}{2}\|\sum_{n=1}^{N}\rho_{n}\Tilde{\mathbf{v}}_{n}^{\prime(t)}\|^{2}+\\
\frac{1}{2}\|\sum_{n=1}^{N}\rho_{n}(\nabla f_{n}(\boldsymbol{\bar{\omega}}^{\prime(t)})-\nabla f_{n}(\boldsymbol{\omega}_{n}^{\prime(t)}))\|^{2}\\
\leqB -\frac{1}{2}\|\nabla f(\boldsymbol{\bar{\omega}}^{\prime(t)})\|^{2}-\frac{1}{2}\|\sum_{n=1}^{N}\rho_{n}\Tilde{\mathbf{v}}_{n}^{\prime(t)}\|^{2}+\\
\frac{L^{2}}{2}\sum_{n=1}^{N}\rho_{n}\|(\boldsymbol{\bar{\omega}}^{\prime(t)}-\boldsymbol{\omega}_{n}^{\prime(t)})\|^{2},
\end{multline}

where the condition \textcircled{a} is satisfied due to the relation $ab = \frac{a^{2}+b^{2}-(a-b)^{2}}{2}$, while condition \textcircled{b} is satisfied by virtue of Assumption \ref{assumption:alpha_convex}.
Our approach to the proof utilizes a similar structure to the one found in the reference \cite{ca2021}, owing to the fact that the global mask $\mbox{m}_{t}$ remains consistent throughout the local training process. The specifics of this proof methodology are explicated as follows. Let $t_{c}=[\frac{t}{E_{l}}]E_{l}$ be the start time of local training. Then $\boldsymbol{\bar{\omega}}^{\prime(t)}$ and $\boldsymbol{\omega}_{n}^{\prime(t)}$ can be written as
\begin{multline}
\label{eq:lemma3_8}
\boldsymbol{\bar{\omega}}^{\prime(t)} = (\boldsymbol{\bar{\omega}}^{\prime(t_{c})}-\frac{1}{K}\sum_{n\in P_{t}}\sum_{k=t_{c}+1}^{t-1}\eta_{k}\Tilde{\mathbf{v}}_{n}^{\prime(k)})\odot \mbox{m}_{t},
\end{multline}
\begin{multline}
\label{eq:lemma3_9}
\boldsymbol{\omega}^{\prime(t)}_{n} = (\boldsymbol{\bar{\omega}}^{\prime(t_{c})}-\sum_{k=t_{c}+1}^{t-1}\eta_{k}\Tilde{\mathbf{v}}_{n}^{\prime(k)})\odot \mbox{m}_{t}.
\end{multline}
According to Lemma \ref{lemma1}, Assumption \ref{assumption:bounded_gradient}, Definition \ref{def:non_iid} and Eq. $(50)-(54)$ in \cite{ca2021}, while taking expectation of $P_{t}$, we can obtain
\begin{multline}
\label{eq:lemma3_10}
\sum_{n=1}^{N}\rho_{n}\|(\boldsymbol{\bar{\omega}}^{\prime(t)}-\boldsymbol{\omega}_{n}^{\prime(t)})\|^{2}\\
\leq (\gamma-1)[\frac{1}{K}\sum_{n=1}^{N}\sum_{k=t_{c}+1}^{t-1}\eta_{k}^{2}\rho_{n}\sigma^{2}_{n}+\gamma E_{l}\sum_{k=t_{c}+1}^{t-1}\eta_{k}^{2}\|\sum_{n=1}^{N}\rho_{n}\mathbf{v}_{n}^{\prime(k)}\|^{2}].
\end{multline}
Consider decreasing learning rate $\eta_{k}^{2}\leq\eta_{t}$ with $t_{c}+1<k<t$ and substitute Eq. (\ref{eq:lemma3_10}) into (\ref{eq:lemma3_7}), we have 
\begin{multline}
\label{eq:lemma3_11}
- \mathbb{E}[\langle\Tilde{\mathbf{v}}^{\prime(t)},\nabla f(\boldsymbol{\bar{\omega}}^{\prime(t)})\rangle] \leq \frac{(\gamma-1)L^{2}\eta_{t}}{2K}\sum_{n=1}^{N}\rho_{n}\sigma^{2}_{n}+\\
\frac{(\gamma-1)\gamma E_{l}\eta_{t}L^{2}}{2}\sum_{k=t_{c}+1}^{t_{c}+E_{l}}\|\sum_{n=1}^{N}\rho_{n}\mathbf{v}_{n}^{\prime(k)}\|^{2}
-\frac{1}{2}\|\nabla f(\boldsymbol{\bar{\omega}}^{\prime(t)})\|^{2}\\
-\frac{1}{2}\|\sum_{n=1}^{N}\rho_{n}\Tilde{\mathbf{v}}_{n}^{\prime(t)}\|^{2}.
\end{multline}
Substituting \eqref{eq:lemma3_6} and \eqref{eq:lemma3_11} into \eqref{eq:lemma3_1} finishes the proof.
\end{proof}

\textbf{Proof of Theorem $1$:}
\begin{proof}
Recall \eqref{eq:prune+_origin}, we sum up the theoretical results of \textit{Lemma} \ref{lemma2} and \ref{lemma3}. 
\begin{multline}\label{eq:thrm1_0}
   \mathbb{E}[f(\boldsymbol{\bar{\omega}}^{\prime(t+1)})]-\mathbb{E}[f(\boldsymbol{\bar{\omega}}^{\prime(t)})]
   \leq \frac{(\gamma-1)L^{2}\eta_{t}^{2}+\eta_{t}^{2}L}{2K}\sum_{n=1}^{N}\rho_{n}\sigma^{2}_{n}+\\
\frac{(\gamma-1)\gamma E_{l}\eta_{t}^{2}L^{2}}{2}\sum_{k=t_{c}+1}^{t_{c}+E_{l}}\|\sum_{n=1}^{N}\rho_{n}\mathbf{v}_{n}^{\prime(k)}\|^{2}
-\frac{\eta_{t}}{2}\|\nabla f(\boldsymbol{\bar{\omega}}^{\prime(t)})\|^{2}+\\
\frac{\gamma\eta_{t}^{2}L-\eta_{t}}{2}\|\sum_{n=1}^{N}\rho_{n}\Tilde{\mathbf{v}}_{n}^{\prime(t)}\|^{2}+\mu\mathbb{E}[\sqrt{\delta_{t+1}}\|\boldsymbol{\bar{\omega}}^{t+1}\|].
\end{multline}
While $\eta_{t}\leq\frac{1}{tL}$ and Assumption \ref{assumption_bounded_weighted_gradient}, Equation \eqref{eq:thrm1_0} can be expressed as:
\begin{multline}\label{eq:thrm1_1}
    \mathbb{E}[f(\boldsymbol{\bar{\omega}}^{\prime(t+1)})]-\mathbb{E}[f(\boldsymbol{\bar{\omega}}^{\prime(t)})]
   \leq \frac{(\gamma-1)L^{2}\eta_{t}^{2}+\eta_{t}^{2}L}{2K}\sum_{n=1}^{N}\rho_{n}\sigma^{2}_{n}+\\
\frac{(\gamma-1)\gamma E_{l}^{2}\eta_{t}^{2}L^{2}G^{2}}{2}+\mu\mathbb{E}[\sqrt{\delta_{t+1}}\|\boldsymbol{\bar{\omega}}^{t+1}\|.
\end{multline}
Denote $\chi=\frac{(\gamma-1)L^{2}+L}{2K}\sum_{n=1}^{N}\rho_{n}\sigma^{2}_{n}+
\frac{(\gamma-1)\gamma E_{l}^{2}L^{2}G^{2}}{2}$, \eqref{eq:thrm1_1} is written as
\begin{multline}\label{eq:thrm1_2}
    \frac{\eta_{t}}{2}\|\nabla f(\boldsymbol{\bar{\omega}}^{\prime(t)})\|^{2}
   \leq \mathbb{E}[f(\boldsymbol{\bar{\omega}}^{\prime(t)})]-\mathbb{E}[f(\boldsymbol{\bar{\omega}}^{\prime(t+1)})]\\+\chi\eta_{t}^{2}
+\mu\mathbb{E}[\sqrt{\delta_{t+1}}\|\boldsymbol{\bar{\omega}}^{t+1}\|.
\end{multline}
 After taking the average of (\ref{eq:thrm1_2}) over time $T$, we have the results as follows after rearranging. Note that we assume that the model will converge to a stable point regarded as the optimum $f^{*}$.   
\begin{multline}\label{eq:thrm1_4}
    \frac{1}{T}\sum_{t=1}^{T}\frac{1}{tL}\|\nabla f(\boldsymbol{\bar{\omega}}^{\prime(t)})\|^{2}\leq \frac{1}{T}\mathbb{E}(f(\boldsymbol{\omega}_{1})-f^{*}) +\\
    \frac{2}{T}\sum_{t=1}^{T}[\mu\mathbb{E}[\sqrt{\delta_{t+1}}\|\boldsymbol{\bar{\omega}}^{t+1}\|
    + \chi\eta_{t}^{2}].
\end{multline}
Since $\frac{1}{T}\sum_{t=1}^{T}\frac{1}{T}\|\nabla f(\boldsymbol{\bar{\omega}}^{\prime(t)})\|^{2}\leq \frac{1}{T}\sum_{t=1}^{T}\frac{1}{t}\|\nabla f(\boldsymbol{\bar{\omega}}^{\prime(t)})\|^{2}$, \eqref{eq:thrm1_4} can be expressed as
\begin{multline}
    \frac{1}{T}\sum_{t=1}^{T}\|\nabla f(\boldsymbol{\bar{\omega}}^{\prime(t)})\|^{2}\leq 2L\mathbb{E}(f(\boldsymbol{\omega}_{1})-f^{*}) +\\2L\sum_{t=1}^{T}[\mu\mathbb{E}[\sqrt{\delta_{t+1}}\|\boldsymbol{\bar{\omega}}^{t+1}\|]+\frac{\pi^{2}}{3L^{2}}\chi,
\end{multline}
where it is known that $\sum_{t=1}^{T}\frac{1}{t^{2}}=\frac{\pi^{2}}{6}$.
\end{proof}

\vspace{12pt}
\end{document}